\documentclass{article}
\pdfoutput=1



\usepackage[preprint,nonatbib]{neurips_2022}




\usepackage[utf8]{inputenc} 
\usepackage[T1]{fontenc}    
\usepackage{hyperref}       
\usepackage{url}            
\usepackage{booktabs}       
\usepackage{amsfonts}       
\usepackage{nicefrac}       
\usepackage{microtype}      
\usepackage{xcolor}         
\usepackage{algorithmic}
\usepackage{algorithm}
\usepackage{graphicx}
\usepackage{subfigure}
\usepackage{easyReview}

\usepackage{marvosym}
\usepackage{ifsym}

\usepackage{amsmath}
\usepackage{amssymb}
\usepackage{mathtools}
\usepackage{amsthm}
\usepackage{bm}
\usepackage[capitalize,noabbrev]{cleveref}
\newcommand{\argmin}{\operatornamewithlimits{arg\,min}}
\theoremstyle{plain}
\newtheorem{theorem}{Theorem}[section]
\newtheorem{proposition}[theorem]{Proposition}
\newtheorem{lemma}[theorem]{Lemma}

\theoremstyle{definition}

\theoremstyle{remark}


\title{A Screening Strategy for Structured Optimization Involving Nonconvex $\ell_{q,p}$ Regularization}

%

\author{%
  Tiange Li\\
  School of Information Science and Technology\\
  ShanghaiTech University\\
  Shanghai, China\\
  \texttt{litg@shanghaitech.edu.cn} \\
   \AND
   Xiangyu Yang \\
   School of Information Science and Technology\\
   ShanghaiTech University\\
   Shanghai, China\\
   \texttt{yangxy3@shanghaitech.edu.cn} \\
   \And
   Hao Wang\Letter\\
   School of Information Science and Technology\\
   ShanghaiTech University\\
   Shanghai, China\\
   \texttt{haw309@gmail.com} \\
   }

\begin{document}

\maketitle

\begin{abstract}
  In this paper, we develop a simple yet effective screening rule strategy to improve the computational efficiency in solving structured optimization involving nonconvex $\ell_{q,p}$ regularization. Based on an iteratively reweighted $\ell_1$ (IRL1) framework, the proposed screening rule works like a preprocessing module that potentially removes the inactive groups before starting the subproblem solver, thereby reducing the computational time in total. This is mainly achieved by heuristically exploiting the dual subproblem information during each iteration.
  Moreover, we prove that our screening rule can remove all inactive variables in a finite number of iterations of the IRL1 method. Numerical experiments illustrate the efficiency of our screening rule strategy compared with several state-of-the-art algorithms.
\end{abstract}

\section{Introduction}

In modern statistics and statistical machine  \cite{liu2007sparse,yin2015minimization,li2017feature,hastie2019statistical}, many researchers are of practical interest to seek the solutions of the optimization problem involving the empirical risk function with an appropriate penalty term, that is,
\begin{equation}\label{eq: genral_ml_formulation}
	\bm{x}^{*} \in \argmin_{x\in\mathbb{R}^{n}}\ f(\bm{x})+\lambda\mathcal{R}(\bm{x}), 
\end{equation}
where $f:\mathbb{R}^{n}\to\mathbb{R}\cup\{+\infty\}$ is closed and convex and usually refers to the error function or a data fidelity function, while the regularization function $\mathcal{R}:\mathbb{R}^{n}\to\mathbb{R}\cup\{+\infty\}$ is closed and (possibly) nonconvex and nonsmooth. In addition, $\lambda > 0$ is a weighting parameter controlling the trade-off between two terms. For this canonical setting, various combinations of the error function and the regularization function can be obtained to lead to different optimization models, as exemplified by the least squares (with a square loss function) and logistic
regression (with a logistic loss function), etc. In particular, the primary aim of the use of the regularization function $\mathcal{R}$ is to promote desired certain sparse or low-rank structures of the achieved solutions, thereby improving the generalization performance in high-dimensional learning settings. For example, in a highly-cited paper \cite{tibshirani1996regression}, the author proposed the well-known least absolute shrinkage and selection operator (LASSO) method that adopted the $\ell_1$-norm regularizer to remove the irrelevant features of the concerned model, resulting in a simpler and easier model to explain. 
 
\par Since LASSO, the study of the sparsity-promoting regularizers has been a central subject both in statistics and sparse optimization communities owing to their widespread adoption in broad applications \cite{fazel2003log,candes2008enhancing,zhang2010nearly}. During the past decade, nonconvex sparsity-promoting regularizers have attracted considerable attention and demonstrated superior performance than convex counterparts such as the $\ell_1$-norm regularizer. Nonconvex regularizers commonly adopted include smoothly clipped absolute deviation (SCAD) \cite{fan2001variable}, the log-sum penalty \cite{candes2008enhancing},  the Minimax Concave Penalty (MCP) (\cite{zhang2010nearly}) and the $\ell_p$ norm penalty with $0<p<1$ \cite{figueiredo2005bound,figueiredo2007majorization}. 
Meanwhile, as a natural extension, a line of works focused on the structured optimization problem with sparsity-promoting regularizers, which aims to induce the sparsity at the group level, i.e., select or remove simultaneously all the interesting variables forming a group structure. Such group sparsity-promoting regularizers include the variants of the nonsmooth $\ell_{p,q}$-norm by picking different choices of $p>0$ and $q>0$, such as convex $\ell_{2,1}$-norm and $\ell_{\infty,1}$-norm \cite{yuan2006model,bach2011optimization}, and nonconvex $\ell_{2,0.5}$-norm \cite{hu2017group}. 

\par  The screening rule is a technique that is typically used to
alleviate the computation burden in solving problems with
sparsity-inducing regularizers. For the LASSO-type problems, several works considered the safe screening rules by exploiting the dual information to identify zero variables at the optimum \cite{ghaoui2010safe,bonnefoy2014dynamic,ndiaye2017gap}. Here ``safe'' means that those zeros identified by the screening rule are rightly in the optimal solution \cite{ghaoui2010safe}. Recently, a popular safe screening rule, called Gap Safe rule in \cite{ndiaye2017gap}, leverages the duality gap while implementing their screening rule inside the solver. In the meanwhile, a heuristic screening rule \cite{tibshirani2012strong} is mainly achieved by reasonably relaxing the safe screening rule proposed in \cite{ghaoui2010safe} and combining a posterior KKT check to ensure the exact solutions. \cite{ndiaye2017gap,ndiaye2020screening} proposed a safe screening strategy without KKT check. However, the optimization procedure in their methods need to be implemented twice. For the nonconvex sparsity-inducing regularizers,  \cite{lee2015strong} proposed a strong rule for nonconvex MCP and SCAD regularization to scale the data matrix before executing the solver. However, nonsmooth $\ell_{p}$ regularization is not considered in their work. Very recently,  \cite{rakotomamonjy2019screening} proposed a screening rule to tackle a class of nonconvex regularizers within the majorization-minimization framework. As a variation of the Gap Safe rule, their screening rule is used within the subproblem solver as it requires the evaluation of the duality gap and generally prefers using coordinate descent algorithms as subproblem solver. 

The nonconvex $\ell_{q,p}$-regularized problem \cite{hu2017group} with $ q \geq 1$, $0<p<1$ is of key interest in this paper. To address such a nonconvex optimization problem, the iteratively reweighted $\ell_1$ algorithm represents a state-of-the-art approach, which accomplishes the optimization goal by solving a sequence of tractable weighted $\ell_{1}$-regularized subproblems \cite{candes2008enhancing,gasso2009recovering}. Such a class of algorithm and their convergence analysis was extensively treated in many pieces of literature, such as \cite{chen2010convergence,wang2021nonconvex}. However, solvers may be inefficient for solving nonconvex problem with $\ell_{p,q}$ regularization in large-scale optimization. Therefore, in this paper, we are highly motivated to improve the computational efficiency for the iteratively reweighted like method in solving the nonconvex $\ell_{q,p}$-regularized problem. To this end, we propose an efficient screening rule strategy to facilitate the computation of the subproblem solver via screening out the zero variables before solving the subproblem.

In this paper, we propose a novel screening rule strategy, which consists of a heuristic screening test module that is used to early identify zero variables before solving the subproblem and a posterior module that guarantees finding the optimal solution of the subproblem. Specifically, the screening test leverages the KKT condition of the weighted $\ell_{1}$ subproblem. The posterior module is a simple KKT check procedure to ensure an exact solution after solving the subproblem in a lower-dimensional setting. Besides, we explicitly explore the propagation conditions of the proposed screening rule in two successive iterations and we show that the proposed screening rule could safely identify and filter the zero variables in a finite number of iterations. As observed in the numerical studies, IRL1 equipped with the proposed screening rule exhibits significant computational gain.

\subsection{Notation and Preliminaries}
Throughout this paper, we limit our discussion in the real $n$-dimensional Euclidean space $\mathbb{R}^n$. For $\bm{x} \in \mathbb{R}^n$, we use $\bm{x}_{\mathcal{S}}$ to denote a subvector of $\bm{x}$ indexed by an index set $\mathcal{S}$. Let $\bm{0}$ be the zero vector with proper size. $\mathbb{N}$ denotes the set of natural numbers, and we use $[n] \subset \mathbb{N}$ to represent an index set $\{1,2,\ldots,n\}$.

For a continuous function $f:\mathbb{R}^{n}\to\mathbb{R}$, the  subdifferential of $f$ at $\bm{x}\in \textrm{dom}f$ is defined by $\partial f(\bm{x}) = \{ \bm{g} \in\mathbb{R}^{n}\mid 
 f(\bm{x}) + \langle\bm{g}, \bar{\bm{x}} - \bm{x}\rangle \leq f(\bar{\bm{x}}), \forall\bar{\bm{x}} \in\text{dom}f\}$, and any element $\bm{v} \in \partial f(\bm{x})$ is called the subgradient of $f$ at $\bm{x}$. In particular, the subdifferential of $\Vert\bm{x}\Vert$ is
 \begin{equation}\label{eq: subdiff_norm}
 	\partial\Vert\bm{x}\Vert = \{\bm{v}\in\mathbb{R}^{n}\mid\langle\bm{v},\bm{x}\rangle = \Vert\bm{x}\Vert,\ \Vert\bm{v}\Vert_{*}\leq 1\},
 \end{equation}
where $\Vert\bm{x}\Vert_{*} = \sup\limits_{\Vert\bm{u}\Vert \leq 1} \langle\bm{x},\bm{u}\rangle$ is the dual norm of $\Vert\cdot\Vert$. Moreover, the dual of the $\ell_{q}$-norm is the $\ell_{q'}$-norm with $q' = q/(q-1)$. 
\begin{lemma}[Fermet's Rule (see \cite{bauschke2011convex} Proposition 26.1)]
For any convex function $f:\mathbb{R}^d\to\mathbb{R}$
\begin{equation}\label{Fermet's rule}
    \bm{x}^{\star}\in\argmin_{\bm{x}\in\mathbb{R}^d} f(\bm{x}) \Longleftrightarrow 0\in\partial f(\bm{x}^{\star}).
\end{equation}
\end{lemma}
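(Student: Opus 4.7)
The plan is to prove both directions of the equivalence by direct substitution of $\bm{g} = \bm{0}$ into the definition of the subdifferential stated immediately above the lemma. Specifically, recall that $\bm{g} \in \partial f(\bm{x})$ holds if and only if $f(\bm{x}) + \langle \bm{g}, \bar{\bm{x}} - \bm{x}\rangle \leq f(\bar{\bm{x}})$ for every $\bar{\bm{x}} \in \text{dom}f$. Since both sides of the target equivalence reduce to a universal quantification over $\bar{\bm{x}}$ of the same inequality with $\bm{g} = \bm{0}$, the argument amounts to rewriting the definition.

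For the forward direction, I would assume that $\bm{x}^{\star}$ is a global minimizer of $f$, so that $f(\bm{x}^{\star}) \leq f(\bar{\bm{x}})$ for every $\bar{\bm{x}} \in \mathbb{R}^{d}$. Adding the identically zero term $\langle \bm{0}, \bar{\bm{x}} - \bm{x}^{\star}\rangle$ to the left-hand side preserves the inequality and produces exactly the subgradient inequality with $\bm{g} = \bm{0}$, hence $\bm{0} \in \partial f(\bm{x}^{\star})$ by definition.

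For the reverse direction, I would begin with $\bm{0} \in \partial f(\bm{x}^{\star})$ and invoke the subgradient inequality at $\bm{g} = \bm{0}$, which reads $f(\bm{x}^{\star}) + \langle \bm{0}, \bar{\bm{x}} - \bm{x}^{\star}\rangle \leq f(\bar{\bm{x}})$ for all $\bar{\bm{x}} \in \mathbb{R}^{d}$. The inner product vanishes, so this collapses to $f(\bm{x}^{\star}) \leq f(\bar{\bm{x}})$ for all $\bar{\bm{x}}$, which is precisely the defining condition for $\bm{x}^{\star}$ to be a minimizer of $f$ over $\mathbb{R}^{d}$.

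Since both implications follow by evaluating the subgradient inequality at the particular choice $\bm{g} = \bm{0}$, I do not anticipate any real obstacle; the result is essentially tautological once one unpacks the definition. Convexity of $f$ is not used in either direction of the argument itself and enters only implicitly by fixing the convex subdifferential as the relevant notion of generalized gradient (and, standardly, by guaranteeing that the set $\partial f(\bm{x}^{\star})$ is non-empty on the interior of the domain, which is not required by the statement itself).
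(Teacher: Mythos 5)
Your proof is correct: with the subdifferential definition given in the paper's preliminaries, both directions amount exactly to the statement that $\bm{0}$ satisfies the subgradient inequality at $\bm{x}^{\star}$, so the equivalence is indeed definition-unpacking. The paper itself gives no proof of this lemma---it is quoted directly from the cited reference (Proposition 26.1 of Bauschke and Combettes)---and your argument coincides with the standard one-line proof given there; your closing remark that convexity is not actually used in either implication (it only guarantees the subdifferential is the right object and is nonempty) is also accurate.
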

Denote the nonzero index set of a vector $\bm{x}\in \mathbb{R}^{n}$ as $\mathcal{I}(\bm{x}) = \{i\in[n]\mid x_i \neq 0\}$, and its complementary set reads $\mathcal{A}(\bm{x}) = \{i\in[n]\mid x_i = 0\}$. Correspondingly, the zero index set of a vector with group structure $\bm{x}_{\mathcal{G}}\in \mathbb{R}^{n}$ is denoted as $\mathcal{A}(\bm{x}_{\mathcal{G}}) = \{i\in[d]\mid \bm{x}_{\mathcal{G}_{i}} = 0\}$. We use a shorthand notion, $\bm{x}\sim\mathcal{N}(\mu,\sigma^2)$, to describe a Gaussian random variable that is distributed Gaussian with mean $\mu$ and variance $\sigma^2$.

\section{Optimization Model and Algorithm}
\subsection{Nonconvex structured Optimization Problem}

In this section, we briefly review the concerned structured optimization model with a nonconvex $\ell_{q,p}$ regularization and describe an IRL1 method for solving it.

The concerned structured optimization problem involving a square loss and the nonconvex $\ell_{q,p}$ regularization with $ q \geq 1, 0<p<1$ can be described as 
\begin{equation}\label{eq: Group_Lasso}\tag{$\mathcal{P}$}
    \min_{\bm{x}}\ \frac{1}{2}\|\bm{Ax}-\bm{y}\|_2^2+\lambda\Vert\bm{x}\Vert_{q,p}^{p},
\end{equation}
where $\bm{A} \in \mathbb{R}^{m\times n}$ ($m < n$ is often assumed) refers to the feature matrix with column-wise feature  $\bm{a}_j \in \mathbb{R}^{m}$, $\forall j\in [n]$,  $ \Vert\bm{x}\Vert_{q,p}^{p} :=  \sum_{i=1}^d\|\bm{x}_{\mathcal{G}_i}\|_q^p$ in which $\bm{x}\in\mathbb{R}^n$ is divided into $d > 0$ non-overlapping groups $[\bm{x}_{\mathcal{G}_1},\ldots,\bm{x}_{\mathcal{G}_d}]$ with $\mathcal{G} = \{\mathcal{G}_i\}_{i=1}^{d}$ being a partition of $[n]$ and $\bm{y}\in\mathbb{R}^{m}$ is a given observation vector. Similarly, $\bm{A}_{\mathcal{G}_{i}} \in \mathbb{R}^{m\times \vert\mathcal{G}_i\vert}$ represents the features corresponding to the $i$th group.  

\subsection{Basic Ideas of IRL1}

In this work, we primarily focus on using an IRL1 method \cite{wang2021nonconvex} for solving~\eqref{eq: Group_Lasso}. In general, the concerned IRL1 method is an instance of the majorization-minimization procedure. Concretely, to overcome the nonsmooth issue of the $\ell_{q,p}$-norm, a perturbation $\bm{\epsilon} > 0$ is added to have a continuously differentiable regularization term during each iteration.
That is, one can approximate $\Vert\bm{x}\Vert_{q,p}^{p}$ with $\Vert\bm{x} + \bm{\epsilon}\Vert_{q,p}^{p}$. 
Then, at the $k$th iterate, it holds that
\begin{equation}\label{eq: property_concavity}
	\begin{aligned}
	  \sum_{i=1}^d\|\bm{x}_{\mathcal{G}_i}+\epsilon_i^{k}\|_q^p
	  \leq \sum_{i=1}^d \|\bm{x}_{\mathcal{G}_i}^{k} + \epsilon_{i}^{k} \|_q^p+p(\|\bm{x}_{\mathcal{G}_i}^{k}\|_q+\epsilon_i^k)^{p-1}(\|\bm{x}_{\mathcal{G}_i}\|_q-\|\bm{x}_{\mathcal{G}_i}^{k}\|_q),
	\end{aligned}
\end{equation}
where the inequality is true by the concavity of $(\cdot)^{p}$. Therefore, the next iterate $\bm{x}^{k+1}$ is obtained as 
\begin{equation}\label{eq: weighted_l1_sub}
	\begin{aligned}
		\bm{x}^{k+1} \!=\! \argmin_{\bm{x}\in\mathbb{R}^{n}} \left\lbrace  \frac{1}{2}\|\bm{Ax}\!-\!\bm{y}\|_2^2\!+\!
		 \lambda \sum_{i=1}^d w_{i}^{k}\|\bm{x}_{\mathcal{G}_i}\|_q\right\rbrace
	\end{aligned}
\end{equation}
with $w_{i}^{k} = p(\|\bm{x}_{\mathcal{G}_i}^{k}\|_q+\epsilon_{i}^{k})^{p-1}$. Typically, the perturbation $\bm{\epsilon}$ is supposed to deacy to $\bm{0}$ as the algorithm proceeds to ensure the global convergence. For completeness, we summarize the IRL1 method in~\Cref{eq: Algo1}.
\begin{algorithm}[htbp]
\caption{An Iteratively Reweighted $\ell_{1}$ Algorithm for Solving~\eqref{eq: Group_Lasso}}
\label{eq: Algo1}
\begin{algorithmic}[1]
\REQUIRE $\mu \in (0,1)$, $\bm{\epsilon}^0\in\mathbb{R}^{d}_{++}$ and $\bm{x}^0\in\mathbb{R}^m$.
\STATE Set $k=0$.
\REPEAT
	\STATE	Compute~$w_{i}^{k} = p(\|\bm{x}_{\mathcal{G}_i}^{k}\|_q+\epsilon_{i}^{k})^{p-1}$, $\forall i \in [d]$.
    \STATE  Solve~\eqref{eq: weighted_l1_sub} for $\bm{x}^{k+1}$.
   \STATE Set $\bm{\epsilon}^{k+1}\leq\mu\bm{\epsilon}^{k}$ and set $k\leftarrow k+1$.
\UNTIL{convergence}
\end{algorithmic}
\end{algorithm}


\section{Proposed Screening Rules}

In this section, we develop a screening rule that aims to filter the null group features in the subproblem solution in advance, enabling the subproblem to be solved in a reduced space and thus accelerating the entire computation. The design of such a rule are inspired by a heuristic use of the dual information of the subproblem. 
\subsection{A Priori Screening Test Procedure}

The proposed heuristic screening rule is motivated by exploiting the dual information of the subproblem associated with~\eqref{eq: weighted_l1_sub}. Specifically, by dropping the superscript $k$, the $k$th primal subproblem~\eqref{eq: weighted_l1_sub} can be equivalently transformed into
\begin{equation}\label{eq: primal_sub}
\min_{\bm{x} \in \mathbb{R}^n}\ P(\bm{x}):= \frac{1}{2}\|\bm{Ax}-\bm{y}\|_{2}^2+\sum_{i=1}^d \lambda_{i}\|\bm{x}_{\mathcal{G}_i}\|_q.
\end{equation}
where $\lambda_{i} = \lambda w_{i} > 0$ represents the group-wise weighting parameter for any $i \in [d]$.
By letting $\bm{z} = \bm{Ax}-\bm{y}$, we can rewrite problem~\eqref{eq: primal_sub} into
\begin{equation}\label{eq: Reformulation}
	\begin{aligned}
		\min_{\bm{x} \in \mathbb{R}^n,\bm{z} \in \mathbb{R}^m} &\ \frac{1}{2}\|\bm{z}\|_{2}^2+\langle \lambda_{\mathcal{G}}^T,\|\bm{x}_{\mathcal{G}}\|_q\rangle\\
		\textrm{s.t.}\quad\ \ &\  \bm{z} = \bm{Ax} - \bm{y},
	\end{aligned}
\end{equation}
where $\|\bm{x}_{\mathcal{G}}\|_q = [\|\bm{x}_{\mathcal{G}_1}\|_q,\ldots\|\bm{x}_{\mathcal{G}_d}\|_q]^{T}$ and $\lambda_\mathcal{G} = [\lambda_{\mathcal{G}_1},\ldots,\lambda_{\mathcal{G}_d}]^{T}$.  Hence, the Lagrangian associated with~\eqref{eq: Reformulation} reads
\begin{equation*}
	\begin{aligned}
		\mathcal{L}_{\lambda_{\mathcal{G}}}(\bm{z},\bm{x};\bm{\theta}) &= \frac{1}{2}\bm{z}^T\bm{z}+ \langle \lambda_{\mathcal{G}}^T,\|\bm{x}_{\mathcal{G}}\|_q\rangle + \bm{\theta}^T(\bm{Ax}-\bm{y}-\bm{z}) = \mathcal{L}_1(\bm{z})+\mathcal{L}_2(\bm{x}),
	\end{aligned}
\end{equation*}
where $\bm{\theta} \in \mathbb{R}^{m}$ is the Lagrange multiplier associated with~\eqref{eq: Reformulation}, and
\begin{equation*}
	\begin{aligned}
		\mathcal{L}_1(\bm{z}) &= \frac{1}{2}\bm{z}^T\bm{z} - \bm{\theta}^T\bm{z}-\bm{\theta}^T\bm{y},\quad \mathcal{L}_2(\bm{x}) = \langle \lambda_{\mathcal{G}}^T,\|\bm{x}_{\mathcal{G}}\|_q\rangle + \bm{\theta}^T\bm{Ax}.
	\end{aligned}
\end{equation*}
\par Then, we have the Lagrange dual function $G:\mathbb{R}^{m}\to\mathbb{R}$:
\begin{equation*}
	G(\bm{\theta}) =\inf\mathcal{L}_1(\bm{z}) + \inf \mathcal{L}_2(\bm{x}).
\end{equation*}
Since $\mathcal{L}_1(\bm{z})$ is a convex quadratic function with respect to $\bm{z}$, it is easy to have
\begin{equation}
	\begin{aligned}
		\inf \mathcal{L}_1(\bm{z}) 
		= -\frac{1}{2}&\bm{\theta}^T\bm{\theta} - \bm{\theta}^T\bm{y} = -\frac{1}{2}\|\bm{\theta}+\bm{y}\|_2^2 +\frac{1}{2}\|\bm{y}\|_2^2.
	\end{aligned}
\end{equation}
\par On the other hand, we have
\begin{equation}
	\begin{aligned}
		\mathcal{L}_2(\bm{x}) &= \sum_{i=1}^d \lambda_{\mathcal{G}_i}\|\bm{x}_{\mathcal{G}_i}\|_q + (\bm{\theta}^T\bm{A})_{\mathcal{G}_i}\bm{x}_{\mathcal{G}_i} \geq \sum_{i=1}^d \lambda_{\mathcal{G}_i}\|\bm{x}_{\mathcal{G}_i}\|_q - |(\bm{\theta}^T\bm{A})_{\mathcal{G}_i}\bm{x}_{\mathcal{G}_i}|\\
		&\geq \sum_{i=1}^d \lambda_{\mathcal{G}_i}\|\bm{x}_{\mathcal{G}_i}\|_q - \|(\bm{\theta}^T\bm{A})_{\mathcal{G}_i}\|_{q'}\|\bm{x}_{\mathcal{G}_i}\|_q=\sum_{i=1}^d \|\bm{x}_{\mathcal{G}_i}\|_q(\lambda_{\mathcal{G}_i} - \Vert(\bm{\theta}^T\bm{A})_{\mathcal{G}j}\Vert_{q'}),
	\end{aligned}
\end{equation}
where the second inequality holds by H\"{o}lder's inequality and $q'$ satisfies $q' = q/(q-1)$. Note that we concentrate on the case in which $\lambda_{\mathcal{G}_i} \geq \|(\bm{\theta}^T\bm{A})_{\mathcal{G}_i}\|_{q'} $, otherwise $\inf \mathcal{L}_2(\bm{x})$ attains $-\infty$. Correspondingly, the Lagrange dual problem of~\eqref{eq: primal_sub} reads
\begin{equation}\label{eq: dual_sub}
	\begin{aligned}
		\max_{\bm{\theta}\in\mathbb{R}^{m}} &\quad G(\bm{\theta}) =  -\frac{1}{2}\|\bm{\theta}+\bm{y}\|_2^2 +\frac{1}{2}\|\bm{y}\|_2^2 \\
		\text{s.t.}  \ &\quad
		\lambda_{i} \geq \|(\bm{\theta}^T\bm{A})_{\mathcal{G}_i}\|_{q'}, \quad\forall i\in [d].
	\end{aligned}
\end{equation}

\par We introduce the proposed screening rule by first considering an extreme case that the optimal solution $\bm{x}^{*}$ of~\eqref{eq: primal_sub} is $\bm{0}$, and denote $\bm{\lambda}^0$ as the tuning parameter. By the strong duality, at $(\bm{x}^{*},\bm{\theta}^{*})$, we have $\bm{\theta}^{*} = -\bm{y}$, and this is because the objectives of the primal subproblem~\eqref{eq: primal_sub} is equal to the dual subproblem~\eqref{eq: dual_sub}. Meanwhile, by the dual feasibility of $\bm{\theta}^{*}$, we have
\begin{equation}\label{eq: ZerosolConditon}
	\|((\bm{\theta}^{*})^T\bm{A})_{\mathcal{G}_i}\|_{q'} = \|(\bm{y}^T\bm{A})_{\mathcal{G}_i}\|_{q'} \leq {\lambda}_i^0,\ \forall i \in [d].
\end{equation}

Thereafter, one can obtain the following lemma :
\begin{lemma}
For problem~\eqref{eq: primal_sub}, it holds that
\begin{equation}
    0 \in \argmin_{\bm{x}\in\mathbb{R}^{n}}  P(\bm{x})\Longleftrightarrow \lambda_i\geq\lambda_i^0,\ \forall i \in [d].
\end{equation}
\end{lemma}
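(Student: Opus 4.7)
The plan is to invoke Fermat's rule (equation \eqref{Fermet's rule}) and directly characterize when $\bm{0}$ lies in the subdifferential of $P$ at $\bm{0}$. Since $P$ is a sum of a smooth convex quadratic and the group norm terms $\lambda_i\|\bm{x}_{\mathcal{G}_i}\|_q$, its subdifferential at $\bm{0}$ equals $-\bm{A}^T\bm{y}$ plus the sum, over $i\in[d]$, of $\lambda_i\,\partial\|\cdot\|_q(\bm{0})$ evaluated on the coordinates of the $i$th group. So the first step is to write out $\bm{0}\in\partial P(\bm{0})$ explicitly.

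Next, I would apply the subdifferential formula \eqref{eq: subdiff_norm} for a norm at the origin, which tells us that $\partial\|\bm{u}\|_q\bigl|_{\bm{u}=\bm{0}}$ is exactly the closed unit ball in the dual norm, namely $\{\bm{v}:\|\bm{v}\|_{q'}\le 1\}$, with $q'=q/(q-1)$. Putting this together, $\bm{0}\in\partial P(\bm{0})$ becomes the condition that for each $i$ there exists $\bm{v}_i$ with $\|\bm{v}_i\|_{q'}\le 1$ satisfying $(\bm{A}^T\bm{y})_{\mathcal{G}_i}=\lambda_i\bm{v}_i$, which holds if and only if $\|(\bm{A}^T\bm{y})_{\mathcal{G}_i}\|_{q'}\le\lambda_i$ for every $i\in[d]$. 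Recognizing $\lambda_i^0=\|(\bm{y}^T\bm{A})_{\mathcal{G}_i}\|_{q'}$ from the derivation preceding \eqref{eq: ZerosolConditon} then delivers both directions of the equivalence in a single chain of biconditionals.

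The only subtle point, and probably the main thing to be careful about, is justifying the decomposition of $\partial P(\bm{0})$ as the Minkowski sum of the gradient of the quadratic and the subdifferentials of the group-norm terms. This is the calculus rule $\partial(f+g)=\partial f+\partial g$ for convex functions and requires a qualification condition, which here is trivially satisfied because the smooth quadratic is real-valued and finite everywhere, so Moreau--Rockafellar applies. A brief sentence citing the convexity of $P$ and this additivity rule should suffice. With that in place, the proof reduces to the one-line computation sketched above, with no further analysis needed, since the equivalence of the Hölder bound $\|(\bm{A}^T\bm{y})_{\mathcal{G}_i}\|_{q'}\le\lambda_i$ with representability $(\bm{A}^T\bm{y})_{\mathcal{G}_i}=\lambda_i\bm{v}_i,\ \|\bm{v}_i\|_{q'}\le 1$ is immediate (take $\bm{v}_i=(\bm{A}^T\bm{y})_{\mathcal{G}_i}/\lambda_i$ in one direction; take norms in the other).
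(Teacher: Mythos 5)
Your proof is correct, and it takes a partially different route from the paper's. The paper proves the two directions by different means: necessity ($0$ optimal $\Rightarrow \lambda_i\geq\lambda_i^0$) comes from the dual problem~\eqref{eq: dual_sub} --- strong duality identifies $\bm{\theta}^{*}=-\bm{y}$, and dual feasibility then yields~\eqref{eq: ZerosolConditon} --- while sufficiency is proved exactly as in your final step: set $\bm{\beta}_i=(\bm{y}^T\bm{A})_{\mathcal{G}_i}/\lambda_i$, observe $\|\bm{\beta}_i\|_{q'}\leq 1$ so that $\bm{\beta}_i\in\partial\|\bm{0}\|_q$, and invoke Fermat's rule~\eqref{Fermet's rule}. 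You instead stay entirely on the primal side and obtain both directions from a single chain of equivalences, $\bm{0}\in\partial P(\bm{0})\Longleftrightarrow\|(\bm{A}^T\bm{y})_{\mathcal{G}_i}\|_{q'}\leq\lambda_i$ for all $i\in[d]$, justified by the Moreau--Rockafellar sum rule (valid here because the quadratic is finite-valued and smooth, so $\partial P(\bm{0})$ is $-\bm{A}^T\bm{y}$ plus the groupwise product of scaled dual-norm balls given by~\eqref{eq: subdiff_norm}). What your route buys: it needs no dual problem, no strong duality, and no attainment of the dual optimum, and the necessity direction comes for free because Fermat's rule is itself an equivalence --- it is the cleaner and more self-contained proof of the lemma in isolation. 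What the paper's route buys: the duality computation is not wasted, since the dual feasible set of~\eqref{eq: dual_sub} and the identity $\bm{\theta}^{*}=-\bm{y}$ are precisely what motivate the screening condition~\eqref{eq: rule_final}, so proving necessity through the dual keeps the lemma aligned with the machinery on which the screening strategy rests.
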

\begin{proof}
The proof of sufficiency lies in~\eqref{eq: ZerosolConditon}.
Then, suppose for all $i \in[d]$, there exists $\bm{\beta}_i\in\mathbb{R}^{d}$ such that 
$$0=-(\bm{y}^T\bm{A})_{\mathcal{G}_i}+\lambda_i\bm{\beta}_i.$$
Then, one can obtain 

$$\|\bm{\beta}_i\|_{q'}=\frac{1}{\lambda_i}\|(\bm{y}^T\bm{A})_{\mathcal{G}_i}\|_{q'}\leq 1,\forall i \in[d],$$

where the inequality holds true by $\lambda_i\geq\|(\bm{y}^T\bm{A})_{\mathcal{G}_i}\|_{q'}$.
Equation above shows that $\bm{\beta}_i\in\partial\|0\|_q$, thus it is easy to have that

$$\bm{0}\in -\bm{A}_{\mathcal{G}_i}\bm{y}+ \lambda_{i}\partial\|0\|_q,\forall i \in[d].$$

By Fermet's Rule~\eqref{Fermet's rule}, we have
$0\in\mathop{\arg\min\limits_{\bm{x}\in\mathbb{R}^{n}}}P(x)$. This completes the proof.


\end{proof}
Thus, $\lambda_i<\lambda_i^0,\forall i\in[d]$ can be the only case that we can focus on. By the way, denote $\tilde{\lambda}_{i}^{\max}=\lambda_i^0,\forall i \in [d]$ as the smallest $\bm{\lambda}$ with which $\bm{0}$ is the optimal solution.
That being said, we have $\bm{x}_{\mathcal{G}_{i}}^{*} = \bm{0}$ in \eqref{eq: primal_sub} if $\lambda_{i} \geq \tilde{\lambda}_{i}^{\max}$ holds, $\forall i \in [d]$. However, such an ideal screening rule \eqref{eq: ZerosolConditon} is generally impractical since such conditions are difficult to be satisfied. 

To bypass this issue, in our work, we propose to use a practical screening condition for filtering null groups, i.e.,
\begin{equation}\label{eq: rule_final}
     \|(\bm{y}^T\bm{A})_{\mathcal{G}_i}\|_{q'} \leq \lambda_{i}, \forall i\in [d].
\end{equation}


\par In a pioneer work \cite{tibshirani2012strong}, the authors proposed a strong rule for convex LASSO-type problems. For \eqref{eq: weighted_l1_sub}, strong rule suggests discarding variables $\bm{x}_{\mathcal{G}_i}$ if $\|(\bm{y}^T\bm{A})_{\mathcal{G}_i}\|_{q'} < w_i(2\lambda-\lambda_{\max})$ with $\lambda$ being the tuning parameter in~\eqref{eq: weighted_l1_sub} and $\lambda_{\max}$ correspondingly the smallest tuning parameter which produces solution $\bm{0}$. That rule can be equally written as
\begin{equation}\label{eq: strong_rule}
    \|(\bm{y}^T\bm{A})_{\mathcal{G}_i}\|_{q'} < 2\lambda_{i}-w_{i}\max_i\{\frac{\|(\bm{y}^T\bm{A})_{\mathcal{G}_i}\|_{q'}}{w_{i}}\}.
\end{equation}
It is obvious that the proposed screening rule~\eqref{eq: rule_final} is much simpler than~\eqref{eq: strong_rule}, since rule~\eqref{eq: rule_final} does not involve the computation of $\max_i\{\frac{\|(\bm{y}^T\bm{A})_{\mathcal{G}_i}\|_{q'}}{w_{i}}\}$. On the other hand, strong rule explicitly requires the regularization parameter $\lambda>\frac{1}{2}\max_i\{\frac{\|(\bm{y}^T\bm{A})_{\mathcal{G}_i}\|_{q'}}{w_{\mathcal{G}_i}}\}$ in \eqref{eq: weighted_l1_sub}, which may generally limit its use for an appropriate regularization parameter in practice. Instead, the proposed rule~\eqref{eq: rule_final} does not impose such a requirement of $\lambda$, enabling possibly screening out more null feature groups.

We should highlight that the proposed screening rule~\eqref{eq: rule_final} is practical yet efficient, which is confirmed in the numerical studies. Moreover, we mention that the proposed heuristic screening rule~\eqref{eq: rule_final} is not completely safe and hence we cannot ensure discard the features correctly. Nevertheless, we can leverage the KKT conditions of~\eqref{eq: primal_sub} to check the wrongly filtered variables, which will be discussed in the next subsection.
\subsection{A Posterior KKT Check Procedure}

To prevent from discarding the null feature groups mistakenly, we follow the similar spirit in \cite{tibshirani2012strong} to combine a KKT check step. For this purpose, we note that the KKT optimality conditions of \eqref{eq: primal_sub} reads
\begin{equation}\label{eq: pri_sub_optimality}
    \bm{0} \in \bm{A}_{\mathcal{G}_i}^{T}(\sum_{i=1}^{d}\bm{A}_{\mathcal{G}_i}\bm{x}_{\mathcal{G}_i} -\bm{y}) + \lambda_{i}\partial\Vert\bm{x}_{\mathcal{G}_i}\Vert_q,\forall i\in [d].
\end{equation}
For detecting wrong screening, suppose we set $\bm{x}_{\mathcal{G}_i}=0$. Then the KKT condition reads  
\begin{equation}
	-\bm{A}_{\mathcal{G}_i}^{T}(\sum_{i=1}^{d}\bm{A}_{\mathcal{G}_i}\bm{x}_{\mathcal{G}_i} -\bm{y}) = \lambda_{i}\bm{\beta},
\end{equation}
where $\bm{\beta}$ is supposed to be in $\partial\|0\|_{q}$. Thus, we can use whether $\bm{\beta}\in\partial\|0\|_{q}$ to check if KKT condition is satisfied. The corresponding KKT check procedure is 
\begin{equation*}
    \|\frac{-\bm{A}_{\mathcal{G}_i}^{T}(\sum_{i=1}^{d}\bm{A}_{\mathcal{G}_i}\bm{x}_{\mathcal{G}_i} -\bm{y})}{\lambda_{i}}\|_{q'}\leq 1, \forall i \in \mathcal{A}(\bm{x}),
\end{equation*}
which is equivalent to 
\begin{equation}\label{eq: KKT_check_}
    \|\bm{A}_{\mathcal{G}_i}^{T}(\sum_{i=1}^{d}\bm{A}_{\mathcal{G}_i}\bm{x}_{\mathcal{G}_i} -\bm{y})\|_{q'}\leq \lambda_i, \forall i \in \mathcal{A}(\bm{x}).
\end{equation}
Therefore, we can use~\eqref{eq: KKT_check_} to detect the wrongly discarded variables after applying the proposed screening rule~\eqref{eq: rule_final} and having the optimal solution of~\eqref{eq: primal_sub}. Specifically, if~\eqref{eq: KKT_check_} is violated, we add the corresponding filtered group back and continue to solve~\eqref{eq: primal_sub} till condition \eqref{eq: KKT_check_} is satisfied. 
Overall, the proposed heuristic screening rule to improve computational efficiency of the IRL1 algorithm for solving~\eqref{eq: primal_sub} is summarized in~\Cref{algo2}.


\begin{algorithm}[htbp]
\caption{Proposed Screening Rule to Accelerate IRL1 Method}
\label{algo2}
\begin{algorithmic}[1]
	\REQUIRE  $\bm{\lambda}_{\mathcal{G}}$,  $\bm{A}\in\mathbb{R}^{m\times n}$, $\bm{y}\in\mathbb{R}^{m}$, $\textit{list} \subset [d]$ and $\textit{scrlist}\subset [d]$.
	\STATE \textbf{Screening Condition}:
	\IF{\eqref{eq: rule_final} holds}
	\STATE $\textit{list} \leftarrow \textit{list} \backslash \{j\}$ and $\textit{scrlist} \leftarrow \textit{scrlist} \cup  \{j\}$.
	\ENDIF
	\STATE \textbf{Subproblem Solution}:
	\STATE Solve~\eqref{eq: primal_sub} with $\bm{A}_{\textit{list}}$ to obtain $\hat{\bm{x}}$ and set $\bm{x}_{\textit{list}}\leftarrow \hat{\bm{x}} $.
	\STATE \textbf{KKT Check}:
	\STATE Set $\textit{errlist}\leftarrow \phi$.
	\IF{\eqref{eq: KKT_check_} is not satisfied}
	\STATE $\textit{scrlist} \leftarrow \textit{scrlist} \backslash \{i\}$ and $\textit{errlist} \leftarrow \textit{errlist} \cup \{i\}$.
	\ENDIF
	\IF{\textit{errlist} is not $\phi$}
	\STATE $\textit{list} \leftarrow \textit{errlist} \cup \textit{list}$ and go to step 5.
	\ENDIF
	\STATE \textbf{Output}:$\bm{x}$, \textit{list} and \textit{scrlist}.
\end{algorithmic}
\end{algorithm}
We briefly discuss the differences and connections between the proposed screening rules and other existing rules to close this section. Unlike the most existing dynamic screening rules \cite{bonnefoy2014dynamic,fercoq2015mind,ndiaye2017gap,rakotomamonjy2019screening,ndiaye2020screening}, the proposed screening rule works before starting the solver for solving the convex subproblem \eqref{eq: primal_sub} during each iteration. This feature allows us to combine the proposed screening rule with those dynamic screening rules \cite{ndiaye2017gap,ndiaye2020screening} to reduce the dimension of input data or generate an approximate solution to warm start the convex problem solver equipped with those dynamic rules. Another line works considered the static screening rules. In particular, we show that our proposed screening rule is superior than that proposed in \cite{tibshirani2012strong}, and the screening rule proposed in \cite{lee2015strong} cannot be applied to nonconvex $\ell_p$ regularization problems due to its nonsmooth nature. 

\section{IRL1 Algorithm with Screening Rule Strategy}
In this section, we provide the theoretical analysis for the proposed screening rule that is utilized within the IRL1 algorithmic framework. We first show that the current filtered groups can be fully detected in the next iteration. Then, we prove that the proposed rule can detect all null groups in a finite number of iterations. 
\subsection{Screening Between Iterations}
\par The following lemma states that once the variables enter the screened list in the current iteration, then it is guaranteed to be detected by the proposed screening rule in the next iteration.
\begin{lemma}
Consider the $k$th subproblem~\eqref{eq: primal_sub} with a fixed weighting parameter $\lambda_{i}^{k}$ for $i \in [d]$. Let $\{\mathcal{G}_{\mathcal{S}}\}_{\mathcal{S}\subset [d]}$ be the screened list returned by~\Cref{algo2} that is implemented at the $k$th subproblem. Then,  it holds that
\begin{equation}
	\|(\bm{y}^T\bm{A})_{\mathcal{G}_{i}}\|_{q'} < \lambda_{i}^{k+1}, \ \forall i \in \mathcal{S},
\end{equation}
\end{lemma}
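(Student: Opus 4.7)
The plan is to track how both sides of the screening condition evolve from iteration $k$ to iteration $k{+}1$, exploiting the shrinking perturbation $\bm{\epsilon}^k$ together with the fact that any group in the final screened list must actually be zero in the produced iterate.

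First, I would record the consequence of $i \in \mathcal{S}$. By Algorithm~\ref{algo2}, each $i$ in the returned screened list has passed the posterior KKT check~\eqref{eq: KKT_check_}, so the subproblem solution satisfies $\bm{x}_{\mathcal{G}_i}^{k+1}=\bm{0}$. Moreover, because $i$ was eligible for screening, the priori rule~\eqref{eq: rule_final} held at the $k$th iterate, namely
\begin{equation*}
\|(\bm{y}^T\bm{A})_{\mathcal{G}_i}\|_{q'} \;\leq\; \lambda_i^{k} \;=\; \lambda\,p\bigl(\|\bm{x}_{\mathcal{G}_i}^{k}\|_q + \epsilon_i^{k}\bigr)^{p-1}.
\end{equation*}

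Next, I would compute the updated weight at iteration $k{+}1$. Using $\bm{x}_{\mathcal{G}_i}^{k+1}=\bm{0}$ in Line 3 of Algorithm~\ref{eq: Algo1} gives
\begin{equation*}
\lambda_i^{k+1} \;=\; \lambda\,p\bigl(\|\bm{x}_{\mathcal{G}_i}^{k+1}\|_q + \epsilon_i^{k+1}\bigr)^{p-1} \;=\; \lambda\,p\bigl(\epsilon_i^{k+1}\bigr)^{p-1}.
\end{equation*}
The core observation is then a monotonicity argument. Since $\mu\in(0,1)$ and $\epsilon_i^{k}>0$, the update rule $\bm{\epsilon}^{k+1}\leq \mu\,\bm{\epsilon}^{k}$ yields $\epsilon_i^{k+1} \leq \mu\,\epsilon_i^{k} < \epsilon_i^{k} \leq \|\bm{x}_{\mathcal{G}_i}^{k}\|_q + \epsilon_i^{k}$. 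Because $p\in(0,1)$ forces $p-1<0$, the map $t\mapsto t^{p-1}$ is strictly decreasing on $(0,\infty)$, and hence
\begin{equation*}
\bigl(\epsilon_i^{k+1}\bigr)^{p-1} \;>\; \bigl(\|\bm{x}_{\mathcal{G}_i}^{k}\|_q + \epsilon_i^{k}\bigr)^{p-1}.
\end{equation*}
Multiplying by $\lambda p>0$ gives $\lambda_i^{k+1} > \lambda_i^{k}$, which chained with the screening inequality at iteration $k$ delivers $\|(\bm{y}^T\bm{A})_{\mathcal{G}_i}\|_{q'} \leq \lambda_i^{k} < \lambda_i^{k+1}$, exactly the claim.

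The only delicate point, and the part most likely to trip up the write-up, is the first step: one has to be explicit that membership in the \emph{returned} screened list $\mathcal{S}$ means the group survived the KKT-check loop and therefore $\bm{x}_{\mathcal{G}_i}^{k+1}=\bm{0}$ is an actual property of the next iterate, not merely a screening hypothesis. Once that is pinned down, the remainder is simply the monotonicity of $t^{p-1}$ combined with the strict shrinkage $\epsilon_i^{k+1}<\epsilon_i^{k}$ enforced by $\mu<1$.
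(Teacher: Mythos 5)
Your proof is correct and follows essentially the same route as the paper's: use $\bm{x}_{\mathcal{G}_i}^{k+1}=\bm{0}$ for screened groups, note $\epsilon_i^{k+1} < \|\bm{x}_{\mathcal{G}_i}^{k}\|_q + \epsilon_i^{k}$, invoke the strict monotonic decrease of $t\mapsto t^{p-1}$ to get $\lambda_i^{k+1} > \lambda_i^{k}$, and chain this with the screening inequality at iteration $k$. Your write-up is in fact slightly more careful than the paper's, since you make explicit both that membership in the returned list entails surviving the KKT check (hence the group is genuinely zero in the iterate) and that the rule~\eqref{eq: rule_final} held at iteration $k$ — two facts the paper uses implicitly.
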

\begin{proof}
	At the $k$th subproblem, for any $i \in \mathcal{S}$ , we know $\bm{x}_{ \mathcal{G}_i}^{k+1} = \bm{0}$. Therefore, we have 
	\begin{equation}
		\|\bm{x}_{\mathcal{G}_i}^{k+1}\|_q +\epsilon_i^{k+1} = \epsilon_i^{k+1} < \|\bm{x}_{\mathcal{G}_i}^{k}\|_q +\epsilon_i^{k},
	\end{equation}
where the inequality holds since $\bm{\epsilon}$ decreases in each iteration. As a result, it holds that
\begin{equation*}
	 \lambda_{i}^{k+1} = \lambda p\|\bm{x}_{\mathcal{G}_i}^{k+1}+\epsilon_i^{k+1}\|_q^{p-1}>\lambda p\|\bm{x}_{\mathcal{G}_i}^{k}+\epsilon_i^{k}\|_q^{p-1} = \lambda_{i}^{k},
\end{equation*}
where the inequality holds since $(\cdot)^{p-1}$ monotonically decreases over $\mathbb{R}_{++}$.  Therefore,
\begin{equation}
	\|(\bm{y}^T\bm{A})_{\mathcal{G}_{i}}\|_{q'} \leq \lambda_{i}^{k} < \lambda_{i}^{k+1},
\end{equation}
as desired. This completes the proof.
\end{proof}

Overall, we now state the IRL1 algorithm equipped with our proposed screening rue in~\Cref{algo: overall}.

\begin{algorithm}[H]
\caption{IRL1 with Proposed Screening Rule}
\label{algo: overall}
\begin{algorithmic}[1]
\REQUIRE $\mu \in (0,1)$, $\lambda >0 $,  $\bm{x}^0\in\mathbb{R}^n_{++}$ ,  $\bm{\epsilon}^0\in\mathbb{R}^d_{++}$,  $\bm{A}\in\mathbb{R}^{m\times n}$ and $\bm{y}\in\mathbb{R}^{m}$.
\STATE Set $k=0$, $\textit{list}\leftarrow[d]$, $\textit{scrlist}\leftarrow\phi$.
\REPEAT
    \STATE   Compute~$w_{\mathcal{G}_i}^{k} = p(\|\bm{x}_{\mathcal{G}_i}^{k}\|_q+\epsilon_{i}^{k})^{p-1}$, $\forall i \in [d]$.
   \STATE Set $\lambda_{\mathcal{G}_i}^k \leftarrow \lambda w_{\mathcal{G}_i}^{k} $, $\forall i \in [d]$.
   \STATE Call~\Cref{algo2} with $\bm{A}_{list}$ to obtain $\bm{x}^{k+1}$ and update $\textit{list}$, $\textit{scrlist}$.
   \STATE Set $\bm{\epsilon}^{k+1}\leq\mu\bm{\epsilon}^{k}$ and set $k\leftarrow k+1$.
\UNTIL{convergence}
\end{algorithmic}
\end{algorithm}

In the step $5$, we use the proximal gradient method proposed in \cite{wright2009sparse} for solving~\eqref{eq: primal_sub}, which admits an efficient soft-thresholding operation. Moreover, we follow \cite{wright2009sparse} to use the warm-start technique to facilitate the subproblem solution, which leverages the last iterate to initialize the next subproblem.


\subsection{Algorithm Analysis}
The global convergence results of the basic IRL1 algorithm for generalized nonconvex problem with convex constraints is shown in \cite{wang2021nonconvex}. The convergence theory also applies to problem~\eqref{eq: Group_Lasso}. Note that the proposed screening strategy accelerates solving each subproblem by reducing the data dimension while keep the solution still optimal to each subproblem. In consequence the convergence of the IRL1 algorithm is not influenced by the proposed strategy. In this section, we show that the proposed screening rule can discard all null group features in a finite number of iterations. The result indicates that the faster $\epsilon$ decreases, the faster the proposed screening rule detects all null groups.
\begin{lemma}\label{C}
Let $\{\bm{x}^k\}$ be the sequence generated by~\Cref{algo: overall}. Then, there exists $C>0$ such that 
$$\|\bm{A}_{\mathcal{G}_i}^T(\sum_{j=1}^{d}\bm{A}_{\mathcal{G}_j}\bm{x}_{\mathcal{G}_j}^{k} -\bm{y})\|_{q'} < C,\forall k\in\mathbb{N},\forall i \in[d].$$
\end{lemma}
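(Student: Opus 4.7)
The plan is to establish the bound by first showing that the sequence $\{\bm{x}^k\}$ itself is bounded, after which the desired inequality follows from elementary norm estimates. The key observation is that although Algorithm~\ref{algo: overall} uses a smoothed regularizer $\|\bm{x}+\bm{\epsilon}^k\|_{q,p}^p$, the surrogate subproblem objective dominates the true objective $F(\bm{x}) := \tfrac{1}{2}\|\bm{Ax}-\bm{y}\|_2^2 + \lambda\|\bm{x}+\bm{\epsilon}^k\|_{q,p}^p$ at the current iterate, by the concavity-based majorization in~\eqref{eq: property_concavity}. Combined with the minimization step that produces $\bm{x}^{k+1}$, this yields the standard IRL1 descent property
\begin{equation*}
\tfrac{1}{2}\|\bm{Ax}^{k+1}-\bm{y}\|_2^2 + \lambda\|\bm{x}^{k+1}+\bm{\epsilon}^{k+1}\|_{q,p}^p \;\leq\; \tfrac{1}{2}\|\bm{Ax}^{k}-\bm{y}\|_2^2 + \lambda\|\bm{x}^{k}+\bm{\epsilon}^{k}\|_{q,p}^p,
\end{equation*}
where I would additionally invoke the monotonicity $\bm{\epsilon}^{k+1}\leq\mu\bm{\epsilon}^k$ to handle the change in perturbation across iterates. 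Hence the sequence of objective values is bounded above by the initial value $F_0 := \tfrac{1}{2}\|\bm{Ax}^{0}-\bm{y}\|_2^2 + \lambda\|\bm{x}^{0}+\bm{\epsilon}^{0}\|_{q,p}^p$.

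Next I would use coercivity. Since $0<p<1$ and $q\geq 1$, the map $\bm{x}\mapsto \|\bm{x}\|_{q,p}^p = \sum_i \|\bm{x}_{\mathcal{G}_i}\|_q^p$ still tends to $+\infty$ as $\|\bm{x}\|\to\infty$ (each group contributes a sublinear but unbounded term), so $F(\cdot)$ is coercive uniformly in $\bm{\epsilon}^k\leq \bm{\epsilon}^0$. Combined with the descent bound above, there exists $R>0$ independent of $k$ such that $\|\bm{x}^k\|_2 \leq R$ for every $k\in\mathbb{N}$.

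With boundedness in hand, the main inequality is immediate. For each $i\in[d]$ and $k\in\mathbb{N}$, by the equivalence of finite-dimensional norms there is a constant $c_{q'}>0$ with $\|\cdot\|_{q'}\leq c_{q'}\|\cdot\|_2$ on $\mathbb{R}^{|\mathcal{G}_i|}$, hence
\begin{equation*}
\bigl\|\bm{A}_{\mathcal{G}_i}^T\bigl(\textstyle\sum_{j}\bm{A}_{\mathcal{G}_j}\bm{x}_{\mathcal{G}_j}^{k}-\bm{y}\bigr)\bigr\|_{q'}
\;\leq\; c_{q'}\|\bm{A}_{\mathcal{G}_i}\|_2\bigl(\|\bm{A}\|_2\,\|\bm{x}^k\|_2 + \|\bm{y}\|_2\bigr)
\;\leq\; c_{q'}\|\bm{A}\|_2\bigl(\|\bm{A}\|_2 R+\|\bm{y}\|_2\bigr) =: C.
\end{equation*}
Taking this $C$ (which is independent of $i$ and $k$) completes the argument.

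The main obstacle, I expect, is carefully justifying the descent-plus-coercivity argument in the presence of the shrinking perturbation $\bm{\epsilon}^k$: one must ensure that the surrogate value at $\bm{x}^{k+1}$ with the \emph{new} weight $\bm{\epsilon}^{k+1}$ is still controlled by its value with the old $\bm{\epsilon}^k$. Since $\|\bm{x}+\bm{\epsilon}\|_{q,p}^p$ is monotonically decreasing in each $\epsilon_i$ (as $p<1$ and each summand is a concave increasing function of $\epsilon_i$... wait, it is increasing in $\epsilon_i$), so strictly speaking decreasing $\bm{\epsilon}$ \emph{decreases} the regularizer, which is in our favor and preserves the uniform upper bound $F_0$. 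Otherwise the proof is largely routine once these monotonicity bookkeeping points are settled.
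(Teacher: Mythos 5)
Your proof is correct, but it takes a genuinely different route from the paper. The paper disposes of this lemma in a single line: it declares the bound to be a direct consequence of the global convergence of the IRL1 scheme (leaning on the convergence theory it cites from \cite{wang2021nonconvex}), under which the iterates are convergent, hence bounded, and the continuous map $\bm{x}\mapsto\|\bm{A}_{\mathcal{G}_i}^T(\bm{Ax}-\bm{y})\|_{q'}$ is automatically bounded along the sequence. You instead reconstruct the needed ingredient from scratch: the majorization--minimization descent inequality, the observation that shrinking $\bm{\epsilon}^k$ only decreases the smoothed regularizer (so the objective values stay below their initial value), coercivity of $\tfrac12\|\bm{Ax}-\bm{y}\|_2^2+\lambda\|\bm{x}\|_{q,p}^p$ to get a uniform radius $R$ with $\|\bm{x}^k\|_2\le R$, and finally norm equivalence plus submultiplicativity. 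What your approach buys is self-containedness: you need only the descent property, which is strictly weaker than, and is itself the engine behind, the global convergence result the paper invokes as a black box. Two bookkeeping points to tighten. First, your descent inequality presumes that $\bm{x}^{k+1}$ is an exact minimizer of the \emph{full} surrogate subproblem~\eqref{eq: weighted_l1_sub}; this is true here only because the posterior KKT check in~\Cref{algo2} restores any wrongly screened groups, a fact the paper asserts and that you should cite explicitly, since your whole chain hangs on it. Second, the lemma states a strict inequality, so after your estimate simply take $C$ to be your bound plus one; this is cosmetic.
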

This lemma relies on a direct consequence of the global convergence of~\Cref{algo: overall}.
\begin{lemma}\label{Czero}
Let $\{\bm{x}^k\}$ be the sequence generated by~\Cref{algo: overall} and constant $C$ is defined in~\cref{C}. Then it holds true that if $\lambda_i^{\tilde{k}} \geq C$ for some $\tilde{k}\in\mathbb{N}$, then $\bm{x}^k_{\mathcal{G}_i} = 0$ for all $k\geq\tilde{k}$.
\begin{proof}
The KKT condition reads
\begin{equation}
    -\bm{A}_{\mathcal{G}_i}^T(\sum_{j=1}^{d}\bm{A}_{\mathcal{G}_j}\bm{x}_{\mathcal{G}_j}^{k} -\bm{y})=\lambda_i^{\tilde{k}}\bm{\beta},
\end{equation}
where $\bm{\beta} \in \partial\|\bm{x}_{\mathcal{G}_i}\|_q$.
Then, it equivalently holds true that 
\begin{equation}
    \frac{1}{\lambda_i^{\tilde{k}}}\|\bm{A}_{\mathcal{G}_i}^T(\sum_{j=1}^{d}\bm{A}_{\mathcal{G}_j}\bm{x}_{\mathcal{G}_j}^{k} -\bm{y})\|_{q'}= \|\bm{\beta}\|_{q'}.
\end{equation}
If $\bm{x}_{\mathcal{G}_i} \neq 0$, then 
$$1=\|\bm{\beta}\|_{q'}=  \frac{1}{\lambda_i^{\tilde{k}}}\|\bm{A}_{\mathcal{G}_i}^T(\sum_{j=1}^{d}\bm{A}_{\mathcal{G}_j}\bm{x}_{\mathcal{G}_j}^{k} -\bm{y})\|_{q'}.$$ 
That contradicts $\lambda_i^{\tilde{k}} \geq C>\|\bm{A}_{\mathcal{G}_i}^T(\sum_{j=1}^{d}\bm{A}_{\mathcal{G}_j}\bm{x}_{\mathcal{G}_j}^{k} -\bm{y})\|_{q'},\forall k\in\mathbb{N}$. Therefore, $\bm{x}_{\mathcal{G}_i} = 0.$\\
By induction we know that $\bm{x}_{\mathcal{G}_i}^{\tilde{k}}\equiv0$ for any $k > \tilde{k}$. This completes the proof.
\end{proof}
\end{lemma}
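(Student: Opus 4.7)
My plan is a proof by contradiction at the base index $\tilde{k}$, followed by a short induction that reuses the weight-monotonicity already established in the preceding between-iterations lemma. I would first invoke Fermet's Rule on the $\tilde{k}$-th subproblem~\eqref{eq: primal_sub} with weights $\lambda_i^{\tilde{k}}$, which for the $i$-th group gives the optimality inclusion
\begin{equation*}
\bm{0} \in \bm{A}_{\mathcal{G}_i}^T\bigl(\textstyle\sum_{j=1}^d \bm{A}_{\mathcal{G}_j}\bm{x}_{\mathcal{G}_j}^{\tilde{k}+1} - \bm{y}\bigr) + \lambda_i^{\tilde{k}}\,\partial\|\bm{x}_{\mathcal{G}_i}^{\tilde{k}+1}\|_q.
\end{equation*}
Assuming toward a contradiction that $\bm{x}_{\mathcal{G}_i}^{\tilde{k}+1} \neq \bm{0}$, the subgradient characterization~\eqref{eq: subdiff_norm} forces every $\bm{\beta} \in \partial\|\bm{x}_{\mathcal{G}_i}^{\tilde{k}+1}\|_q$ to satisfy $\|\bm{\beta}\|_{q'} = 1$, since $\langle\bm{\beta},\bm{x}_{\mathcal{G}_i}^{\tilde{k}+1}\rangle = \|\bm{x}_{\mathcal{G}_i}^{\tilde{k}+1}\|_q$ saturates H\"older. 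Taking dual norms on both sides of the KKT inclusion would then produce $\|\bm{A}_{\mathcal{G}_i}^T(\sum_j \bm{A}_{\mathcal{G}_j}\bm{x}_{\mathcal{G}_j}^{\tilde{k}+1} - \bm{y})\|_{q'} = \lambda_i^{\tilde{k}} \geq C$, directly contradicting the strict bound of~\Cref{C}; hence $\bm{x}_{\mathcal{G}_i}^{\tilde{k}+1} = \bm{0}$.

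For the induction I would reuse the propagation mechanism from the preceding lemma: once a group $i$ is zeroed at iteration $k$, the weight collapses to $\lambda_i^{k} = \lambda p\,(\epsilon_i^{k})^{p-1}$, and $\epsilon_i^{k} < \|\bm{x}_{\mathcal{G}_i}^{k-1}\|_q + \epsilon_i^{k-1}$ combined with the monotone decrease of $(\cdot)^{p-1}$ on $\mathbb{R}_{++}$ yields $\lambda_i^{k} > \lambda_i^{k-1}$. Chaining this from $\tilde{k}+1$ onward gives $\lambda_i^{k} > \lambda_i^{\tilde{k}} \geq C$ at every subsequent subproblem, so the identical KKT contradiction propagates $\bm{x}_{\mathcal{G}_i}^{k+1} = \bm{0}$ step by step and the induction closes.

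The main obstacle is bookkeeping rather than substantive mathematics: $\lambda_i^{\tilde{k}}$ drives the subproblem that produces $\bm{x}^{\tilde{k}+1}$, not $\bm{x}^{\tilde{k}}$ itself, so the statement ``$\bm{x}_{\mathcal{G}_i}^{k} = \bm{0}$ for all $k \geq \tilde{k}$'' is most naturally read from $k = \tilde{k}+1$ onward (unless one pays for an extra assumption on $\bm{x}^{\tilde{k}}$). A smaller but genuine technical point is that the contradiction requires the \emph{strict} inequality in~\Cref{C}: were the bound only $\leq C$, the boundary case $\lambda_i^{\tilde{k}} = C$ would slip through the argument, so strictness must be preserved when $C$ is pinned down from the convergence of $\{\bm{x}^k\}$ invoked at the start of that lemma.
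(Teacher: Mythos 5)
Your proposal is correct and takes essentially the same route as the paper's own proof: write the KKT system of the weighted subproblem, note that any subgradient of $\|\cdot\|_q$ at a nonzero group has dual norm exactly one, conclude $\|\bm{A}_{\mathcal{G}_i}^T(\sum_j\bm{A}_{\mathcal{G}_j}\bm{x}_{\mathcal{G}_j}-\bm{y})\|_{q'}=\lambda_i^{\tilde{k}}\geq C$, contradict the strict bound of \cref{C}, and then induct. Your write-up is in fact more careful than the paper's: you make the induction step explicit via the weight monotonicity $\lambda_i^{k+1}>\lambda_i^{k}$ for zeroed groups (which the paper invokes only implicitly with ``by induction''), and you correctly flag the off-by-one indexing issue---the weights $\lambda_i^{\tilde{k}}$ determine $\bm{x}^{\tilde{k}+1}$ rather than $\bm{x}^{\tilde{k}}$---that the paper's statement and proof gloss over.
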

\begin{lemma}\label{stable sign}
Let $\{\bm{x}^k\}$ be the sequence generated by~\Cref{algo: overall} and constant $C$ is defined in~\cref{C}. Then, there exist sets $\mathcal{A}^{\star}\subset\{1,2,\ldots,n\}$ and $\bar{k}>0$, such that $\forall k >\bar{k}$, $\mathcal{A}(\bm{x}^k)=\mathcal{A}^{\star}$.
\end{lemma}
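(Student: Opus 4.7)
The plan is to define $\mathcal{A}^\star$ directly from the limit of $\{\bm{x}^k\}$ and then argue that the two halves of the support (zero vs. nonzero at the limit) each stabilize after finitely many iterations, with the zero half being handled by \Cref{Czero} and the nonzero half by a trivial continuity argument.

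First I would invoke the global convergence of \Cref{algo: overall} (inherited from the analysis in \cite{wang2021nonconvex}, as the authors note in the preceding paragraph) to fix a limit point $\bm{x}^\star$ of $\{\bm{x}^k\}$, together with the fact that $\bm{\epsilon}^k \to \bm{0}$ by the update $\bm{\epsilon}^{k+1} \leq \mu\bm{\epsilon}^k$ with $\mu \in (0,1)$. Then I would set $\mathcal{A}^\star = \mathcal{A}(\bm{x}^\star) = \{\, i \in [d] : \bm{x}_{\mathcal{G}_i}^\star = \bm{0} \,\}$. The claim will be that $\mathcal{A}(\bm{x}^k) = \mathcal{A}^\star$ for all sufficiently large $k$.

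Second, for each $i \notin \mathcal{A}^\star$, convergence $\bm{x}_{\mathcal{G}_i}^k \to \bm{x}_{\mathcal{G}_i}^\star \neq \bm{0}$ yields a threshold $\bar{k}_i$ such that $\|\bm{x}_{\mathcal{G}_i}^k\|_q \geq \tfrac{1}{2}\|\bm{x}_{\mathcal{G}_i}^\star\|_q > 0$ for all $k > \bar{k}_i$; hence $i \notin \mathcal{A}(\bm{x}^k)$ past this threshold. For each $i \in \mathcal{A}^\star$, both $\|\bm{x}_{\mathcal{G}_i}^k\|_q \to 0$ and $\epsilon_i^k \to 0$, so that
\begin{equation*}
\lambda_i^k = \lambda p\bigl(\|\bm{x}_{\mathcal{G}_i}^k\|_q + \epsilon_i^k\bigr)^{p-1} \longrightarrow +\infty
\end{equation*}
because $p-1 < 0$. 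Consequently there exists $\tilde{k}_i$ with $\lambda_i^{\tilde{k}_i} \geq C$, where $C > 0$ is the constant supplied by \Cref{C}. \Cref{Czero} then forces $\bm{x}_{\mathcal{G}_i}^k = \bm{0}$ for every $k \geq \tilde{k}_i$, i.e., $i \in \mathcal{A}(\bm{x}^k)$.

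Taking $\bar{k} = \max_{i \in [d]} \max\{\bar{k}_i, \tilde{k}_i\}$, both inclusions $\mathcal{A}(\bm{x}^k) \subseteq \mathcal{A}^\star$ and $\mathcal{A}^\star \subseteq \mathcal{A}(\bm{x}^k)$ hold for $k > \bar{k}$, which is exactly what we need. The main obstacle is the reliance on a full convergence result for $\{\bm{x}^k\}$; if only subsequential convergence is available from \cite{wang2021nonconvex}, I would instead argue directly that once $\|\bm{x}_{\mathcal{G}_i}^k\|_q$ together with $\epsilon_i^k$ becomes sufficiently small along any subsequence, the blow-up of $\lambda_i^k$ and \Cref{Czero} permanently pin $\bm{x}_{\mathcal{G}_i}^k$ at zero, so the support can transition only finitely many times and must stabilize; the nonzero coordinates are then controlled by the boundedness of $\{\bm{x}^k\}$ and a uniform lower bound extracted from the KKT equation of \eqref{eq: primal_sub}.
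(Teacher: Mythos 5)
Your proposal is essentially correct, but it is worth separating its two halves, because they have different standing relative to the paper. Your \emph{primary} route is genuinely different from the paper's: you define $\mathcal{A}^{\star}$ as the zero-group set of a limit $\bm{x}^{\star}$ and split the indices into the nonzero half (handled by continuity) and the zero half (handled by the blow-up of $\lambda_i^k$ together with \Cref{Czero}). The paper instead argues by contradiction and never invokes convergence of $\{\bm{x}^k\}$ at all: if some group $\mathcal{G}_j$ switched between zero and nonzero infinitely often, then at some iteration $\tilde{k}$ among the infinitely many iterations where $\bm{x}^{\tilde{k}}_{\mathcal{G}_j}=\bm{0}$, the weight equals $\lambda p\,(\epsilon_j^{\tilde{k}})^{p-1}>C$ (because $\epsilon^k \downarrow 0$ and $p-1<0$), whereupon \Cref{Czero} pins that group at zero for all later $k$, contradicting infinite oscillation. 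The important difference is that your main route needs convergence of the \emph{whole} sequence to a single point, which the paper never establishes: Section 4.2 only cites ``global convergence'' from \cite{wang2021nonconvex}, and \Cref{C} uses it merely as a boundedness statement; subsequential convergence would not suffice for your steps, since a group could vanish along one subsequence but not another, and distinct accumulation points could have distinct supports, making $\mathcal{A}(\bm{x}^{\star})$ ill-defined as a single set. You correctly flagged exactly this risk, and your fallback argument --- once $\|\bm{x}^k_{\mathcal{G}_i}\|_q+\epsilon_i^k$ drops below $\bigl(C/(\lambda p)\bigr)^{1/(p-1)}$, \Cref{Czero} freezes the group at zero forever; otherwise, since $\epsilon_i^k \to 0$, the group norm is eventually bounded below by a positive constant and hence eventually nonzero --- is precisely the paper's proof written in direct rather than contradiction form, and it needs neither sequence convergence nor the KKT lower bound you mention for the nonzero groups. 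So your fallback is the safe, self-contained version that matches the paper, while your primary route is arguably cleaner (it identifies $\mathcal{A}^{\star}$ explicitly as the limiting zero set) but purchases that clarity with an unproved whole-sequence convergence hypothesis; if you want to keep the primary route, you would need to verify that \cite{wang2021nonconvex} actually yields convergence of the full iterate sequence (e.g., via a Kurdyka--\L{}ojasiewicz argument) rather than only stationarity of accumulation points.
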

\begin{proof}
Suppose by contradiction this statement is not true. There exists $j\in[d]$ such that $\bm{x}_{\mathcal{G}_j}$ takes zero and nonzero value both for infinite times. Hence, there exists a subsequence $\mathcal{S}_1\cup\mathcal{S}_2=\mathcal{N}$ such that $|\mathcal{S}_1|=\infty,|\mathcal{S}_2|=\infty$, and that
\begin{equation}
    \bm{x}_{\mathcal{G}_j}^k=0,\forall k \in \mathcal{S}_1,\bm{x}_{\mathcal{G}_j}^k\neq0,\forall k \in \mathcal{S}_2.
\end{equation}
Since $\epsilon$ is monotonically decreasing to 0, there exists $\tilde{k}>0$ such that
\begin{equation}
    \lambda_j^{\tilde{k}}=\lambda\cdot p(\|\bm{x}_{\mathcal{G}_j}^{\tilde{k}}\|_q+\epsilon)^{p-1}=\lambda\cdot p(\epsilon)^{p-1}>C.
\end{equation}
It follows by~\cref{Czero} that $ \bm{x}_{\mathcal{G}_j}^k\equiv0$ for any $k>\tilde{k}$, which implies $\{\tilde{k}+1,\tilde{k}+2,\ldots\}\subset\mathcal{S}_1$ and $|\mathcal{S}_2|<\infty$. This violates the assumption $|\mathcal{S}_2|=\infty$. Hence~\cref{stable sign} is true.
\end{proof}

\begin{proposition}
Let $\{\bm{x}^k\}$ be the sequence generated by~\Cref{algo: overall}. 
Then, there exists $\bar{k}\in \mathbb{N}$ such that for any $ k \geq \bar{k}, k \in \mathbb{N}$, it holds that for any $i \in  \mathcal{A}(\bm{x}_{\mathcal{G}}^k)$
\begin{equation*}
	 \|(\bm{y}^T\bm{A})_{\mathcal{G}_i}\|_{q'} \leq \lambda_{i}^k\ \textrm{ and }\ \Vert\bm{A}_{\mathcal{G}_i}^{T}(\sum_{j=1}^{d}\bm{A}_{\mathcal{G}_j}\bm{x}_{\mathcal{G}_j}^{k} - \bm{y})\Vert_{q'} \leq \lambda_{i}^{k}.
\end{equation*}
\end{proposition}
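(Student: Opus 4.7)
The plan is to stitch together the three preceding lemmas together with the elementary observation that, once a group index $i$ settles permanently into the zero support, its weight $\lambda_i^k$ diverges. I would begin by invoking \cref{stable sign} to obtain $\bar{k}_1\in\mathbb{N}$ and a set $\mathcal{A}^\star\subset [d]$ with $\mathcal{A}(\bm{x}_{\mathcal{G}}^k)=\mathcal{A}^\star$ for every $k>\bar{k}_1$. This reduces the claim to an eventual statement about indices $i\in\mathcal{A}^\star$, since for $i\notin\mathcal{A}(\bm{x}_{\mathcal{G}}^k)$ there is nothing to verify.

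Next I would analyze the weights on $\mathcal{A}^\star$. For $i\in\mathcal{A}^\star$ and $k>\bar{k}_1$ we have $\bm{x}_{\mathcal{G}_i}^k=\bm{0}$, so $\lambda_i^k=\lambda p(\epsilon_i^k)^{p-1}$. Because the update enforces $\bm{\epsilon}^{k+1}\leq \mu\bm{\epsilon}^k$ with $\mu\in(0,1)$, the perturbation $\epsilon_i^k$ decays geometrically to $0$; combined with $p-1<0$ this yields $\lambda_i^k\to\infty$ monotonically for each such $i$.

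On the other side, both quantities to be bounded are uniformly controlled in $k$. The left-hand side of the first inequality is the $k$-independent constant $\|(\bm{y}^T\bm{A})_{\mathcal{G}_i}\|_{q'}$, while \cref{C} supplies a single constant $C>0$ with $\|\bm{A}_{\mathcal{G}_i}^T(\sum_j \bm{A}_{\mathcal{G}_j}\bm{x}_{\mathcal{G}_j}^k-\bm{y})\|_{q'}<C$ for every $k$ and $i$. Setting $M:=\max_{i\in\mathcal{A}^\star}\max\{\|(\bm{y}^T\bm{A})_{\mathcal{G}_i}\|_{q'},\,C\}$ — a finite number since $|\mathcal{A}^\star|\leq d$ — the divergence $\lambda_i^k\to\infty$ allows me to pick $\bar{k}\geq\bar{k}_1$ such that $\lambda_i^k\geq M$ for all $k\geq\bar{k}$ and all $i\in\mathcal{A}^\star$. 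Both required inequalities then follow immediately for every such $k$ and $i$.

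I do not anticipate a real obstacle here; the argument is essentially bookkeeping on top of the prior lemmas. The one subtlety worth flagging is the chronology: \cref{stable sign} only provides an iteration past which the zero support is frozen, whereas the proposition additionally demands enough extra iterations of $\epsilon$-decay to push $\lambda_i^k$ above the uniform bound $M$. Taking $\bar{k}$ as the maximum of these two requirements is what ties the argument together.
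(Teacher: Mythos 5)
Your proposal is correct and follows essentially the same route as the paper's own proof: invoke \cref{stable sign} to freeze the zero support, use \cref{C} together with the constant $\max_i\|(\bm{y}^T\bm{A})_{\mathcal{G}_i}\|_{q'}$ as the uniform bound (your $M$ is the paper's $\max(C_1,C)$), and then use the decay of $\bm{\epsilon}$ to push $\lambda_i^k = \lambda p(\epsilon_i^k)^{p-1}$ above that bound for all sufficiently large $k$. Your handling of the chronology (taking $\bar{k}$ as the larger of the two thresholds) matches the paper's choice $\bar{k}=k_2$ with $k_2\geq k_1$.
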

\begin{proof}
By~\cref{stable sign}, there exists $k_1\in\mathbb{N}$ such that $\forall k \geq k_1,\mathcal{A}(\bm{x}^{k})$ remains stable. Then, since $\bm{\epsilon}$ decreases in each iteration,
$\forall i \in \mathcal{A}(\bm{x}_{\mathcal{G}}^{*})$, there exists $k_2\in\mathbb{N}$ such that $$\forall k \geq k_2,\quad \epsilon^{k}_i \leq (\frac{\max(C_1,C)}{\lambda p})^{1/(p-1)}$$ with $C_1 = \max\limits_{i\in[d]}\|(\bm{y}^TA)_{\mathcal{G}_i}\|_{q'}$ and
$C$ is defined in~\cref{C}. Therefore, we have 
\begin{equation}\label{eq:safe}
	\begin{aligned}
		\lambda_{\mathcal{G}_i}^{k} = \lambda p(\epsilon^{k}_i)^{p-1} &\geq \lambda p((\frac{\max(C_1,C)}{\lambda p})^{\frac{1}{p-1}})^{p-1} = \max(C_1,C),
	\end{aligned}
\end{equation}
which indicates that the null group features are safely screened out by~\eqref{eq: rule_final} and~\eqref{eq: KKT_check_}. Since $\lambda_{\mathcal{G}_i}^{k}>C,\forall k >k_2$, by~\cref{Czero}, it can be obtained that $k_2\geq k_1$. Hence, the proposition is true and $\bar{k}=k_2$.
\end{proof}


\section{Numerical Experiments}

In this section, we conduct extensive experiments on synthesised data and real-world data to illustrate the substantial gains in computational efficiency brought by the proposed screening rule strategy. All numerical experiments are implemented in Matlab R2020b and executed on Macbook Air (Intel Core i7, 1.2GHz, 16GB of RAM). On the experiment setup, we initialize $\epsilon_0 =(\frac{\lambda_{\max}}{2p\lambda})^{\frac{1}{p-1}}$ and set $\mu = 0.9$ for the IRL1 algorithm. The starting point $\bm{x}^{0}$ is initialized as the solution of the $\ell_{q,1}$ regularization problem with early stopping. We determine the weighting parameter $\lambda$ using the grid search of $\{\lambda_t\} = \{10^{-(1+\frac{2t}{Q-1})}\lambda_{\max}\}$ with $t \in\{0,1,\ldots, Q-1\}$. Moreover, we terminate \Cref{algo: overall} if
$\frac{\|\bm{x}^{k+1}-\bm{x}^{k}\|}{\|\bm{x}^{k+1}\|} \leq 10^{-6}$ was satisfied.
\begin{figure}[htbp]
\centering
\subfigure[Low-dimension data]{
\begin{minipage}[b]{0.48\textwidth}
\centering 
\includegraphics[width=1\textwidth]{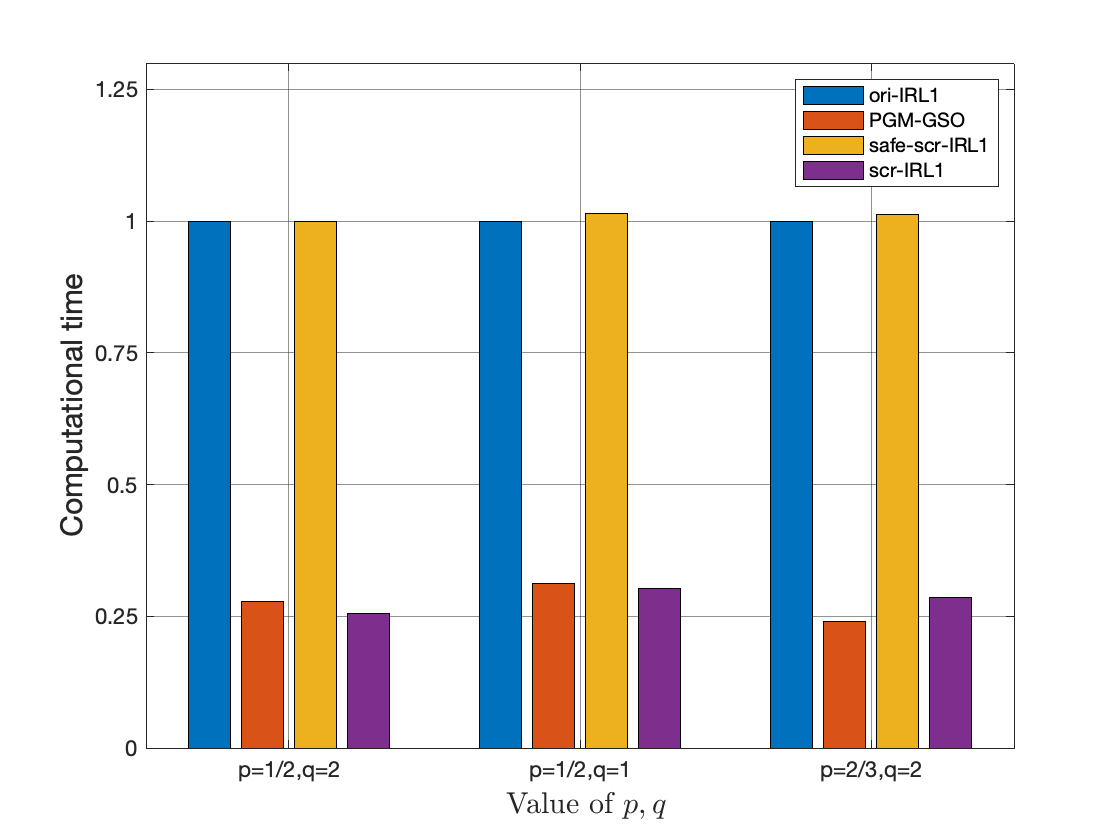} 
\end{minipage}
}
\subfigure[High-dimension data]{
\begin{minipage}[b]{0.48\textwidth}
\centering 
\includegraphics[width=1\textwidth]{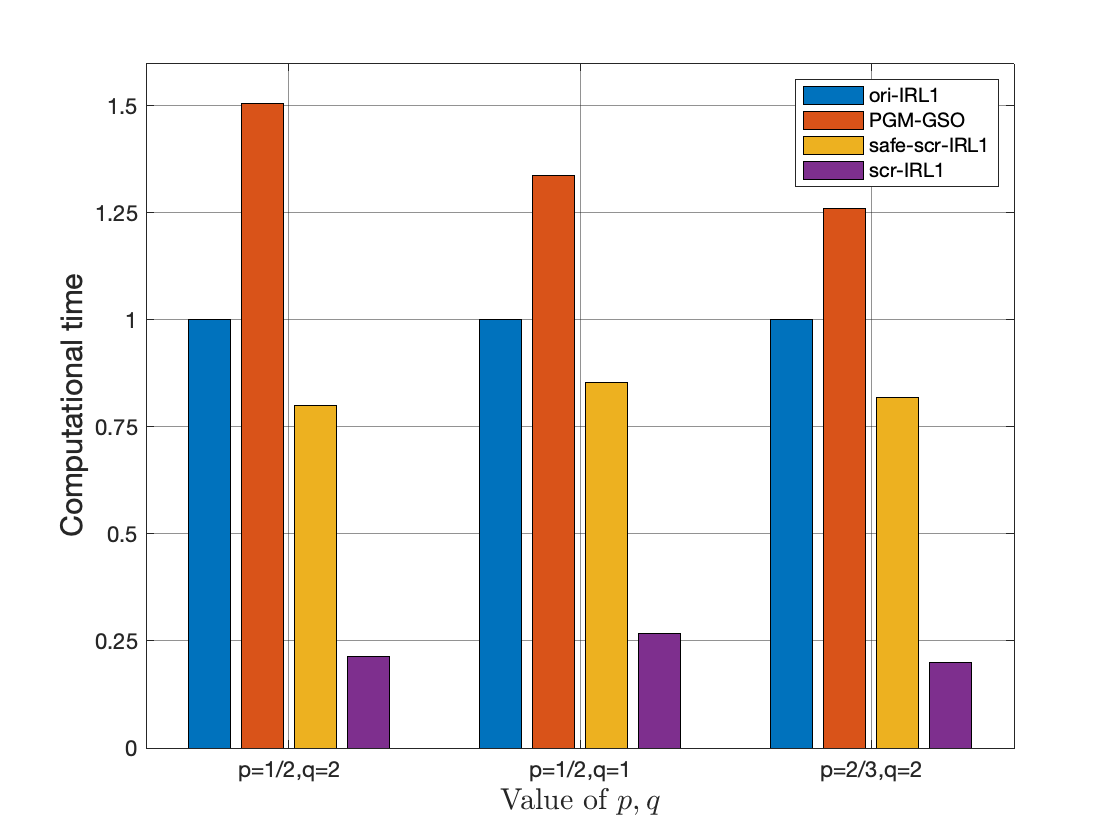} 
\end{minipage}
}
\caption{Illustration of computational time for different methods and screening rule strategies. Note that the computational time of ori-IRL1 is normalized to $1$.}
\label{time}
\end{figure}

\subsection{Experiments on Synthesized Data}
We first implement sparse signal recovery experiment to illustrate the effectiveness and efficiency of the proposed screening rule. Specifically, the benchmark algorithms considered in our comparison include the proximal gradient method for group sparse optimization (PGM-GSO) proposed in \cite{hu2017group},  the original IRL1 algorithm (ori-IRL1) and the safe heuristic screening rule method (safe-scr-IRL1) presented in \cite{ndiaye2017gap}. In particular, the IRL1 algorithm with our  screening rule strategy is abbreviated as scr-IRL1. Additionally, the termination criterion is same as that of \Cref{algo: overall}.

In this test, we set $Q = 20$ and fix the number of active groups $k=10$. Following \cite{hu2017group}, we generate $\bm{A}\in\mathbb{R}^{m\times n}$ satisfying $\bm{A}^{T}\bm{A} =\bm{I}$ with each of its entry following a standard Gaussian distribution. We let $\bm{x}_{\textrm{true}}$ denote the ground-truth vector to be estimated with group size $\vert\mathcal{G}_{i}\vert = 5$ for each $i \in [d]$, and non-zeros follow a standard Gaussian distribution. The predictor is generated obeying $\bm{y} = \bm{A}\bm{x}_{\textrm{true}} + \bm{\zeta}$, where $\zeta_{i}\sim \mathcal{N}(0,10^{-4})$. We consider two scenarios with $(m,n) = (500,2000)$ and  $(m,n) = (500,10000)$. Meanwhile, we compare different pairs of $(p,q)$ with $(p,q)=(\frac{1}{2},2)$, $(p,q)=(\frac{1}{2},1)$ and $(p,q)=(\frac{2}{3},2)$.

From~\Cref{time}, we first observe that the proposed scr-IRL1 generally outperforms all other methods with respect to the computational time under all scenarios. In particular, the computational time of ori-IRL1 can be reduced by a factor of at least $3$ times by equipping with the proposed screening rule, and the proposed screening rule is superior to the ones presented in \cite{ndiaye2017gap}. In addition, it also reveals that the number of wrongly discarded groups is significantly limited by applying our screening strategy.  On the other hand, we can see that our proposed screening rule becomes more efficient in a high-dimensional regime.

\par Next, we study the relationship between the computational gain and regularization parameter $\lambda$ and noise level $\sigma$, respectively. In particular, the computational gain is the ratio of the computational time of ori-IRL1 computational and that of scr-IRL1. We consider $(p,q) = (\frac{1}{2},2)$ and fix $k=10$ while varying the number of features $n$ by an increment $2000$. The results are shown in \cref{Gain}.

As observed in \Cref{timea}, larger $\lambda$ generally results in higher computational gain. From~\Cref{timeb}, we can see that larger noise level generally brings more significant computational gain. 
\begin{figure}[htbp]
\centering
\subfigure[]{
\begin{minipage}[b]{0.48\textwidth}
\centering 
\includegraphics[width=1\textwidth]{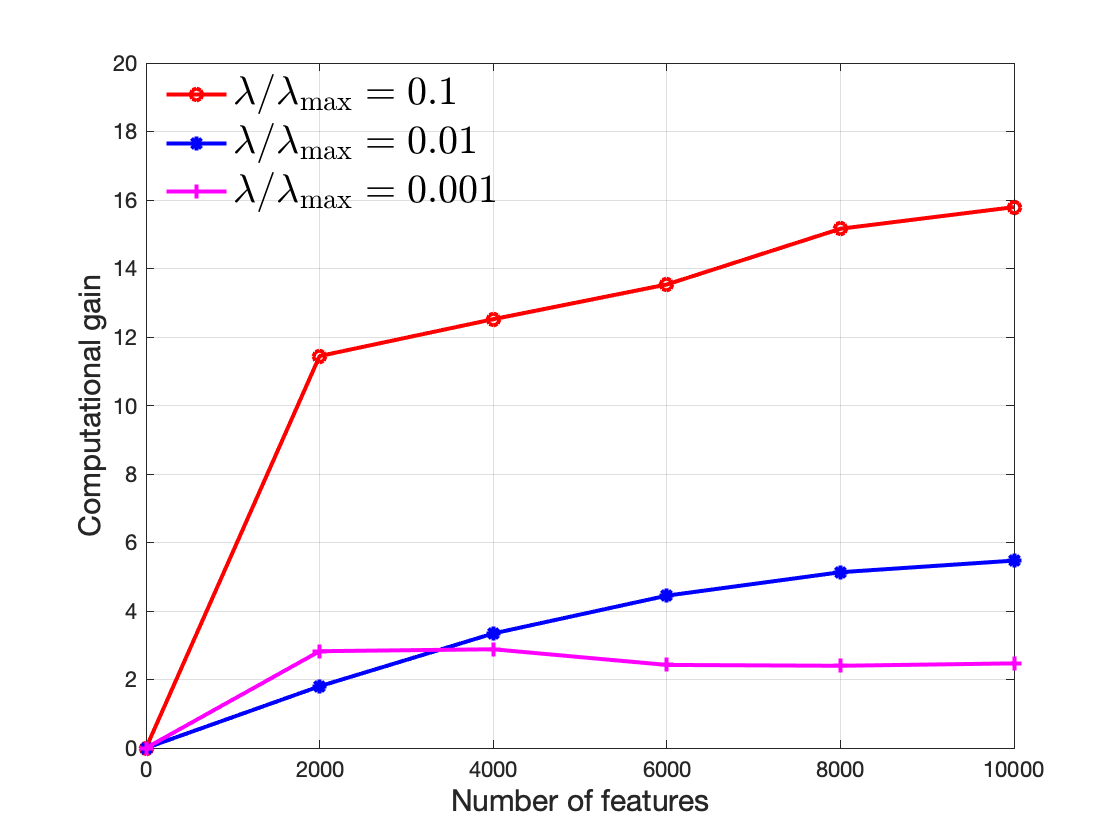}
\label{timea}
\end{minipage}
}
\subfigure[]{
\begin{minipage}[b]{0.48\textwidth}
\centering 
\includegraphics[width=1\textwidth]{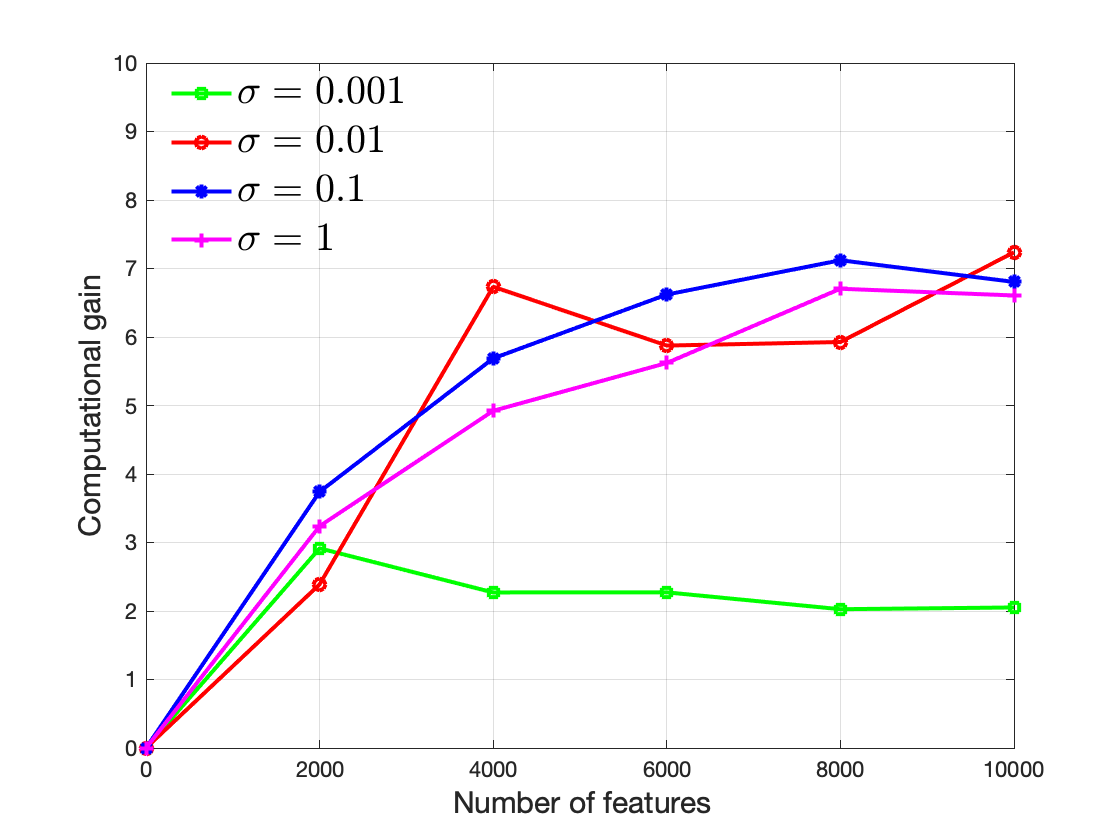}
\label{timeb}
\end{minipage}
}
\caption{Influence of $\lambda$ and $\sigma$ on computational gain. Left: varied $\lambda$ and fixed $\sigma = 0.01$. Right: varied $\sigma$ and fixed $\lambda = 0.01\lambda_{\max}$.}
\label{Gain}
\end{figure}

\subsection{Experiments on Real-world Data}
In this subsection, we carry out the experiments on real-world datasets to illustrate the efficiency of scr-IRL1. Specifically, we consider both the regression and classification tasks in machine learning, and the datasets are downloaded from LIBSVM dataset\footnote{\href{https://www.csie.ntu.edu.tw/~cjlin/libsvmtools/datasets/}{
bodyfat, pyrim and ionosphere }}, Kaggle\footnote{\href{https://www.kaggle.com/iabhishekofficial/mobile-price-classification}{mobileprice}} and UCI machine learning repository.\footnote{\href{https://archive.ics.uci.edu/ml/datasets/breast+cancer+wisconsin+(diagnostic)}{breastcancer}} The regression datasets include \textbf{bodyfat}, \textbf{pyrim} and \textbf{mobliprice}. The classification datasets include \textbf{ionosphere} and \textbf{breastcancer}. We follow the same strategy used in \cite{roth2008group} to form a group structure of the data used, which expands the data dimensions for features via a polynomial mapping. The expended dimensions for each dataset are $252\times455$, $74\times1755$, $2000\times950$, $351\times2805$ and $569\times2175$, respectively. Meanwhile, we set $Q=20$ and consider $(p,q) = (\frac{1}{2},2)$. 

\par From~\Cref{Fig: a}, we can see that the computational time is significantly reduced when the proposed screening rule is applied to IRL1.  Regarding the prediction accuracy shown in~\Cref{Fig: b}, the results of both algorithms are quite similar, which indicate that the proposed screening rule scarcely affects the solution quality.
\begin{table*}[htbp]
\caption{Results on the real-world datasets. For the regression task, the mean squared error is used to calculate the prediction error. For the classification task, the overall accuracy of the model is calculated as the ratio of the number of correct classifications and total classifications.}
\subtable[Computational Time(s)]{
\begin{minipage}[b]{0.495\textwidth}
\centering
\label{Fig: a}
\begin{tabular}{@{}cllcl@{}}
\toprule
Datasets                          & \multicolumn{4}{c}{Algorithms}                             \\ \cmidrule(l){2-5} 
\multicolumn{1}{l}{}              & \multicolumn{2}{l}{ori-IRL1} & \multicolumn{2}{l}{scr-IRL1}    \\ \midrule
\multicolumn{1}{c}{bodyfat}      & \multicolumn{2}{l}{$60.0203$}    & \multicolumn{2}{l}{$16.8846$}   \\
\multicolumn{1}{c}{pyrim}        & \multicolumn{2}{l}{$276.6057$}  & \multicolumn{2}{l}{$80.3436$}   \\
\multicolumn{1}{c}{mobileprice}   & \multicolumn{2}{l}{$842.4241$}  & \multicolumn{2}{l}{$408.8177$}   \\
\multicolumn{1}{c}{ionosphere}  & \multicolumn{2}{l}{$1.7127\times 10^3$}  & \multicolumn{2}{l}{$494.9049$}   \\
\multicolumn{1}{c}{breastcancer} & \multicolumn{2}{l}{$1.2308\times 10^3$}  & \multicolumn{2}{l}{$649.0551$} \\ \bottomrule
\end{tabular}
 \end{minipage}
 }
\subtable[Prediction Performance]{
\begin{minipage}[b]{0.495\textwidth}
\centering
\label{Fig: b}
\begin{tabular}{@{}cclcl@{}}
\toprule
Datasets                          & \multicolumn{4}{c}{Algorithms}                              \\ \cmidrule(l){2-5} 
\multicolumn{1}{l}{}              & \multicolumn{2}{l}{ori-IRL1} & \multicolumn{2}{l}{scr-IRL1}     \\ \midrule
\multicolumn{1}{c}{bodyfat}      & \multicolumn{2}{l}{$7.4108\times 10^{-4}$} & \multicolumn{2}{l}{$7.4198\times 10^{-4}$} \\
\multicolumn{1}{c}{pyrim}        & \multicolumn{2}{l}{$4.2491\times 10^{-4}$} & \multicolumn{2}{l}{$4.2490\times 10^{-4}$} \\
\multicolumn{1}{c}{mobileprice}   & \multicolumn{2}{l}{$0.1103$}   & \multicolumn{2}{l}{$0.1103$}   \\
\multicolumn{1}{c}{ionosphere}  & \multicolumn{2}{l}{$99.15\%$}   & \multicolumn{2}{l}{$99.15\%$}   \\
\multicolumn{1}{c}{breastcancer} & \multicolumn{2}{l}{$98.07\%$}   & \multicolumn{2}{l}{$98.07\%$}   \\ \bottomrule
\end{tabular}
\end{minipage}
}
\end{table*}
\par 
Next, we aim to confirm the efficiency of the proposed screening rule on the \textbf{breastcancer} dataset during the screening procedure and the KKT check procedure. We fix $(p,q) = (\frac{1}{2},2)$ while varying $\lambda$. 

In the first $20$ iterations, we record the number of screened groups, wrongly screened groups detected by the KKT check procedure and null groups of the solution obtained without screening rule. We report the ratios of these quantities in \cref{abla}. In particular, the ratio of the number of screened groups and the number of null groups of the solution obtained without screening rule uses shorthand RSN, and the ratio of the number of wrongly screened groups detected by the KKT check procedure and the number of null groups of the solution obtained without screening rule uses shorthand RWN.

From \cref{abla}, we can see that the proposed screening rule strategy can detect all of the null groups in a finite number of iteration while making mistakes with a extremely low probability.
\begin{figure}[htbp]
\centering
\subfigure[$\lambda = 0.001\lambda_{\max}$]{
\begin{minipage}[b]{0.31\textwidth}
\centering 
\includegraphics[width=1\textwidth]{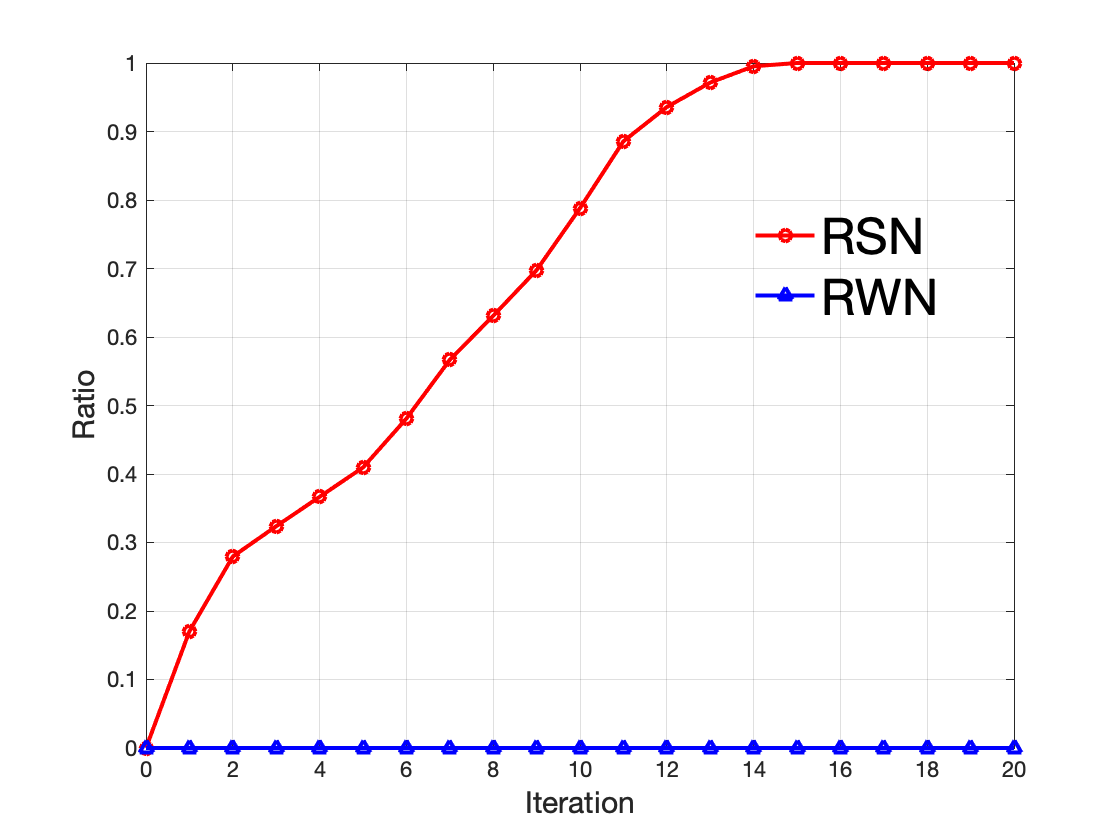}
\end{minipage}
}
\subfigure[$\lambda = 0.01\lambda_{\max}$]{
\begin{minipage}[b]{0.31\textwidth}
\centering 
\includegraphics[width=1\textwidth]{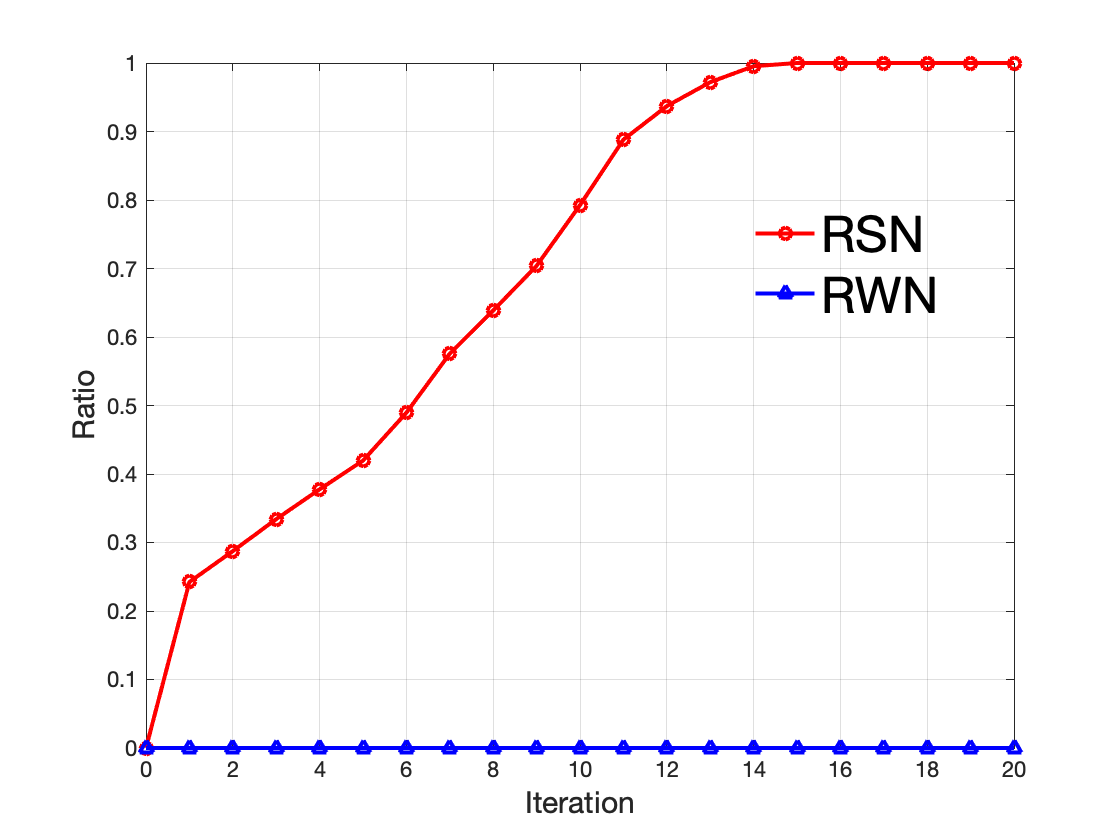}
\end{minipage}
}
\subfigure[$\lambda = 0.1\lambda_{\max}$]{
\begin{minipage}[b]{0.31\textwidth}
\centering 
\includegraphics[width=1\textwidth]{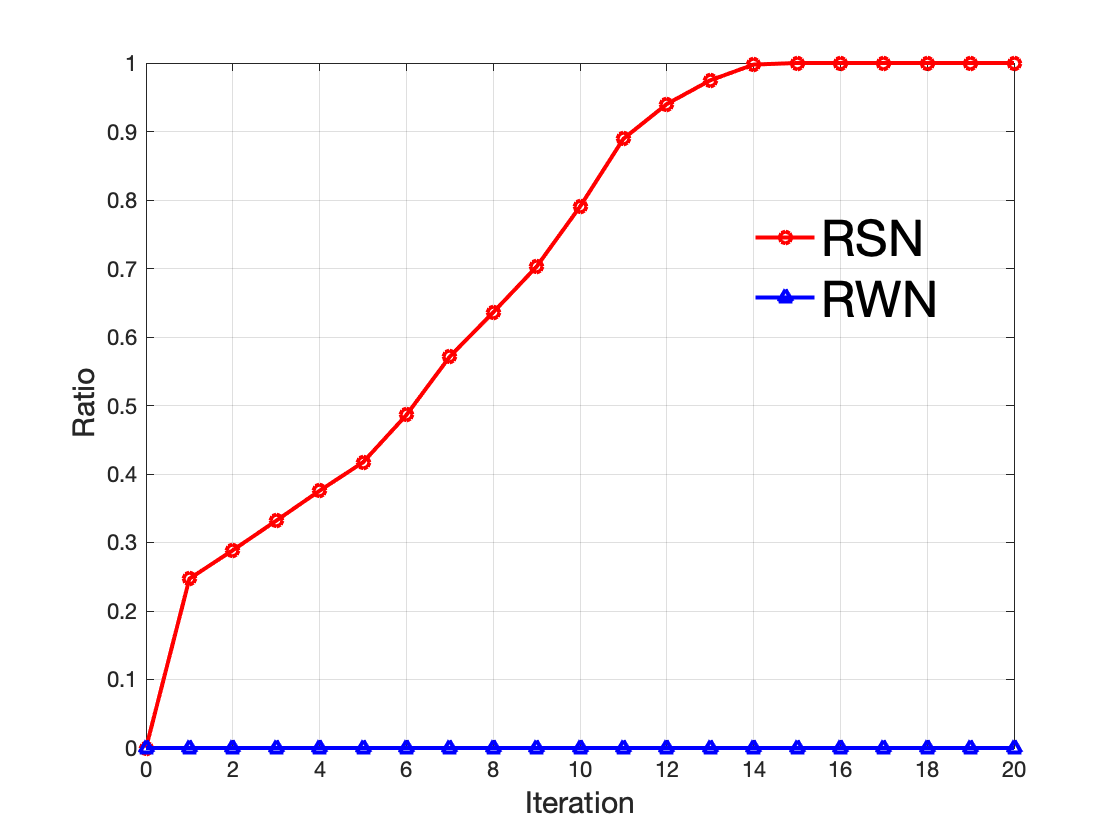}
\end{minipage}
}
\caption{Test of the efficiency and accuracy of the proposed screening rule on \textbf{breastcancer} dataset. }
\label{abla}
\end{figure}

\section{Conclusion}
In this paper, we proposed a screening rule strategy for structured optimization problem with nonconvex $\ell_{q,p}$ regularizer. The proposed screening rule could be applied before starting the subproblem solver, which was supposed to identify efficiently all null group features at the optimum. After solving the low-dimensional subproblem, we used a simple KKT check to guarantee finding the
optimal solution. We showed that those null group features could be identified and removed safely within a finite number of iterations. Numerical experiments demonstrated the empirical performance of the proposed screening rule on both synthesized and real-world datasets. 

\bibliography{neurips_2022}
\bibliographystyle{plain}

\end{document}